\documentclass[11pt]{article}

\usepackage[final]{acl}

\usepackage{times}
\usepackage{latexsym}

\usepackage[T1]{fontenc}

\usepackage[utf8]{inputenc}

\usepackage{microtype}

\usepackage{inconsolata}

\usepackage{graphicx}

\usepackage{hyperref}
\usepackage{bbm}
\usepackage{booktabs}
\usepackage{multirow}
\usepackage{algorithm}
\usepackage{algorithmic}
\usepackage{enumitem}
\usepackage{enumerate}
\usepackage{graphicx}
\usepackage{subcaption}
\usepackage{diagbox} 
\usepackage{balance}
\usepackage[normalem]{ulem}
\useunder{\uline}{\ul}{}
\usepackage{pifont}
\usepackage[table,xcdraw]{xcolor}
\newcommand{\cmark}{{\color{green!82!black}\ding{51}}}

\newcommand{\xmark}{{\color{red!90!black}\ding{55}}}
\usepackage{listings}
\usepackage[most]{tcolorbox}
\tcbuselibrary{breakable}
\usepackage{cuted}  

\definecolor{codegreen}{rgb}{0,0.6,0}
\definecolor{codegray}{rgb}{0.5,0.5,0.5}
\definecolor{codepurple}{rgb}{0.58,0,0.82}
\definecolor{backorange}{RGB}{255,250,240}
\definecolor{frameorange}{RGB}{255,140,0}
\definecolor{codebg}{rgb}{0.95,0.95,0.95}

\usepackage{amsmath}
\usepackage{amssymb}
\usepackage{amsthm}

\newtheorem{theorem}{Theorem}

\theoremstyle{definition}

\theoremstyle{remark}

\usepackage{makecell}
\usepackage{array}

\newcommand{\eg}{\emph{e.g.}}
\newcommand{\ie}{\emph{i.e.}}

\lstdefinestyle{mystyle}{
    backgroundcolor=\color{backorange},   
    commentstyle=\color{codegreen},
    keywordstyle=\color{magenta},
    numberstyle=\tiny\color{codegray},
    stringstyle=\color{codepurple},
    basicstyle=\ttfamily\footnotesize,
    breakatwhitespace=false,         
    breaklines=true,                 
    captionpos=b,                    
    keepspaces=true,                 
    numbers=left,                    
    numbersep=5pt,                  
    showspaces=false,                
    showstringspaces=false,
    showtabs=false,                  
    tabsize=2
}
\newtcolorbox{pythonbox}[1][]{
  enhanced,
  breakable,
  title={Python Code Using COPT},
  colframe=frameorange,
  colback=backorange,
  coltitle=white,
  fonttitle=\bfseries,
  attach boxed title to top left={xshift=5mm, yshift*=-\tcboxedtitleheight/2},
  boxed title style={
    frame hidden, size=small, boxrule=0pt, arc=4pt, colback=frameorange
  },
  arc=3mm, boxrule=0.5pt, top=12pt,
  #1
}

\title{OSCAR: Optimization-Steered Agentic Planning for\\ Composed Image Retrieval}

\newcounter{equalcontrib}
\newcounter{corrauth}

\author{
  \textbf{Teng Wang\textsuperscript{1}\thanks{Equal Contribution}}
  \setcounter{equalcontrib}{\value{footnote}},
  \textbf{Rong Shan\textsuperscript{2,3}}\footnotemark[\value{equalcontrib}],
  \textbf{Jianghao Lin\textsuperscript{2}\thanks{Corresponding author}}
  \setcounter{corrauth}{\value{footnote}},
  \\
  \textbf{Junjie Wu\textsuperscript{1}},
  \textbf{Tianyi Xu\textsuperscript{2}},
  \textbf{Jianping Zhang\textsuperscript{2}},
  \textbf{Wenteng Chen\textsuperscript{2}},
  \\
  \textbf{Changwang Zhang\textsuperscript{1}},
  \textbf{Zhaoxiang Wang\textsuperscript{1}},
  \textbf{Weinan Zhang\textsuperscript{2}},
  \textbf{Jun Wang\textsuperscript{1}}\footnotemark[\value{corrauth}]
  \\
  \textsuperscript{1} OPPO,
  \textsuperscript{2} Shanghai Jiao Tong University,
  \textsuperscript{3} Shanghai Innovation Institute
  \\
  \texttt{wt0318@connect.hku.hk, linjianghao@sjtu.edu.cn, junwang.lu@gmail.com}
}

\begin{document}
\maketitle
\begin{abstract}
Composed image retrieval (CIR) requires complex reasoning over heterogeneous visual and textual constraints.
Existing approaches largely fall into two paradigms: \textit{unified embedding retrieval}, which suffers from \textit{single-model myopia}, and \textit{heuristic agentic retrieval}, which is limited by \textit{suboptimal, trial-and-error orchestration}.
To this end, we propose \textbf{OSCAR}, an \underline{o}ptimization-\underline{s}teered agentic planning framework for \underline{c}omposed im\underline{a}ge \underline{r}etrieval.
We are the first to reformulate agentic CIR from a heuristic search process into a principled trajectory optimization problem.
Instead of relying on heuristic trial-and-error exploration, OSCAR employs a novel offline-online paradigm.
In the offline phase, we model CIR via atomic retrieval selection and composition as a two-stage mixed-integer programming problem, mathematically deriving optimal trajectories that maximize ground-truth coverage for training samples via rigorous boolean set operations.
These trajectories are then stored in a golden library to serve as in-context demonstrations for online steering of VLM planner at online inference time. 
Extensive experiments on three public benchmarks and a private industrial benchmark show that OSCAR consistently outperforms SOTA baselines.
Notably, it achieves superior performance using only \textbf{10\%} of training data, demonstrating strong generalization of planning logic rather than dataset-specific memorization.
Our code is available\footnote{\url{https://github.com/tengwang0318/OSCAR}}. 
\end{abstract}

\section{Introduction}


\begin{figure}[t]
  \centering
  \includegraphics[width=0.48\textwidth]{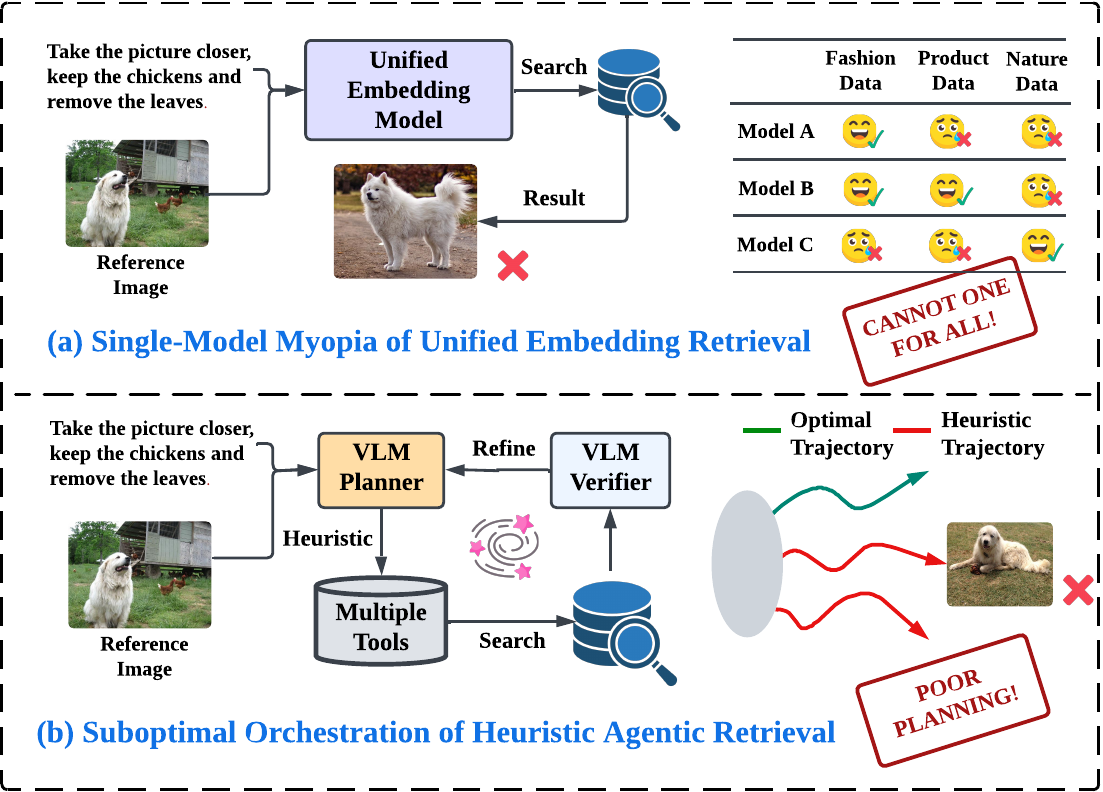}
  \vspace{-17pt}
  \caption{The illustration of limitations of existing image retrieval methods, \ie, (a) single-model myopia of unified embedding retrieval, and (b) suboptimal orchestration of heuristic agentic retrieval.
  }
  \vspace{-17pt}
  \label{fig:intro}
\end{figure}

With the evolution of modern retrieval systems, real-user queries have become increasingly complex and multimodal, which often involve compositional descriptions, multiple constraints, and implicit reasoning over both visual and textual information~\cite{cirsurvey,datta2008image,lin2024rella,shan2025automatic}. 
In response, a growing body of research has emerged to address composed image retrieval, generally falling into two paradigms: \textit{Unified Embedding Retrieval} and \textit{Heuristic Agentic Retrieval}. 
While both have yielded promising advances, they exhibit fundamental limitations that increasingly bottleneck real-world performance, as shown in Figure~\ref{fig:intro}.

\textit{Firstly}, \textbf{unified embedding retrieval} attempts to resolve complex queries via a single, monolithic representation model, whether using a general-purpose multimodal encoder~\cite{vlm2vec,QQMM,flowcir} or a domain-specific model fine-tuned for a particular setting~\cite{ldre, clip}. However, they suffer from \textbf{single-model myopia}. 
Concretely, this paradigm implicitly assumes that a \emph{single latent space} can act as a universal solution, whereas real-world queries are inherently heterogeneous. 
User intents vary drastically in granularity, ranging from high-level stylistic shifts (\eg, more formal) to fine-grained attribute constraints (\eg, specific color, texture, or pattern), and the optimal visual evidence shifts across domains~\cite{circo,cirr}. 
Under such diversity, relying on a single embedding space inevitably leads to short-sighted behaviors. 
As illustrated in Figure~\ref{fig:intro}(a), a model optimized for natural images may fail to resolve fashion-specific attributes, while a fashion-oriented model may struggle with natural scene reasoning. 
Consequently, even state-of-the-art single-model systems remain brittle in practice, as they are unable to adapt to intents that evolve or drift across diverse domains.

\textit{Secondly}, \textbf{heuristic agentic retrieval} seeks to overcome the representation bottleneck by decomposing the task into multi-step workflows invoked by Large Language Models (LLMs)~\cite{gpt4,qwen3,xi2026survey,zhou2026externalization,hrm} or Vision--Language Models (VLMs)~\cite{Qwen3-VL,colorbrowseragent}. These methods leverage external tools (\eg, captioners, rewriters, and retrievers) to handle complex queries~\cite{autocir,xr}. 
However, despite their flexibility, these approaches struggle with \textbf{suboptimal orchestration}, as shown in Figure~\ref{fig:intro}(b). 
Current pipelines rely heavily on the agent's internal heuristics (\eg, ReAct-style loops~\cite{react}), where the model makes greedy, iterative decisions on which tool to call next based on intermediate outputs~\cite{zhu2025evolutionary}. 
This lack of a global objective leads to unstructured and inefficient trajectories characterized by redundant calls, poor logical ordering, and unreliable handling of set-theoretic constraints (inclusion/exclusion). Furthermore, the reliance on iterative interaction incurs significant computational overhead without offering any guarantees towards the optimal retrieval path.


To this end, we propose \textbf{OSCAR}, an \underline{o}ptimization-\underline{s}teered agentic planning framework for \underline{c}omposed im\underline{a}ge \underline{r}etrieval, which, for the first time, reframes agentic CIR from a heuristic search process into a principled trajectory planning problem. 
Unlike prior methods that rely on heuristic, trial-and-error LLM exploration, OSCAR introduces a novel \textit{offline-online} paradigm.
During the offline stage, we treat each individual retriever invocation as the fundamental unit, and derive optimal tool-call planning trajectories for training samples by solving a two-stage mixed-integer programming (MIP) formulation.
These optimal trajectories are then stored in a \textit{golden library} to serve as reasoning demonstrations during inference.
At test time, OSCAR acquires relevant planning patterns to \textit{steer} the VLM agent, enabling it to replicate optimal logical reasoning for CIR without requiring expensive iterative search or human annotation.

In summary, our contributions are as follows:
\begin{itemize}[leftmargin=*]
    \item \textbf{Optimization Perspective.}
    To the best of our knowledge, we are the first to formulate agentic CIR as a MIP problem.
    By shifting from heuristic search to global optimization, we mathematically derive \textit{optimal} planning trajectories for training samples that maximize ground-truth coverage while minimizing computational redundancy, providing strong demonstration signals without human annotation.

    \item \textbf{Set-Theoretic Composition Logic}. We introduce a rigorous logic for composing CIR results via boolean set operations (\ie, union, intersection, and difference).
    This enables the agent to perform explicit inclusion and conservative exclusion reasoning, a capability that is mathematically intractable for single-embedding models and heuristic agentic methods.
    
    \item \textbf{The OSCAR Framework}.
    We propose a novel offline-online paradigm that bridges MIP optimization and agentic planning.
    By storing MIP solutions into a golden library as demonstrations, OSCAR effectively steers VLMs to perform complex compositional planning for CIR in a single inference pass.
    
    \item \textbf{Empirical Superiority}. 
    On three public benchmarks and a private industrial benchmark, OSCAR consistently outperforms SOTA single-embedding and agentic baselines. 
    Notably, it achieves these gains with only \textbf{10\%} of the training data for library construction, demonstrating strong generalization of abstract planning logic rather than simple memorization.
\end{itemize}

\begin{figure*}[t]
  \centering
  \includegraphics[width=0.95\textwidth]{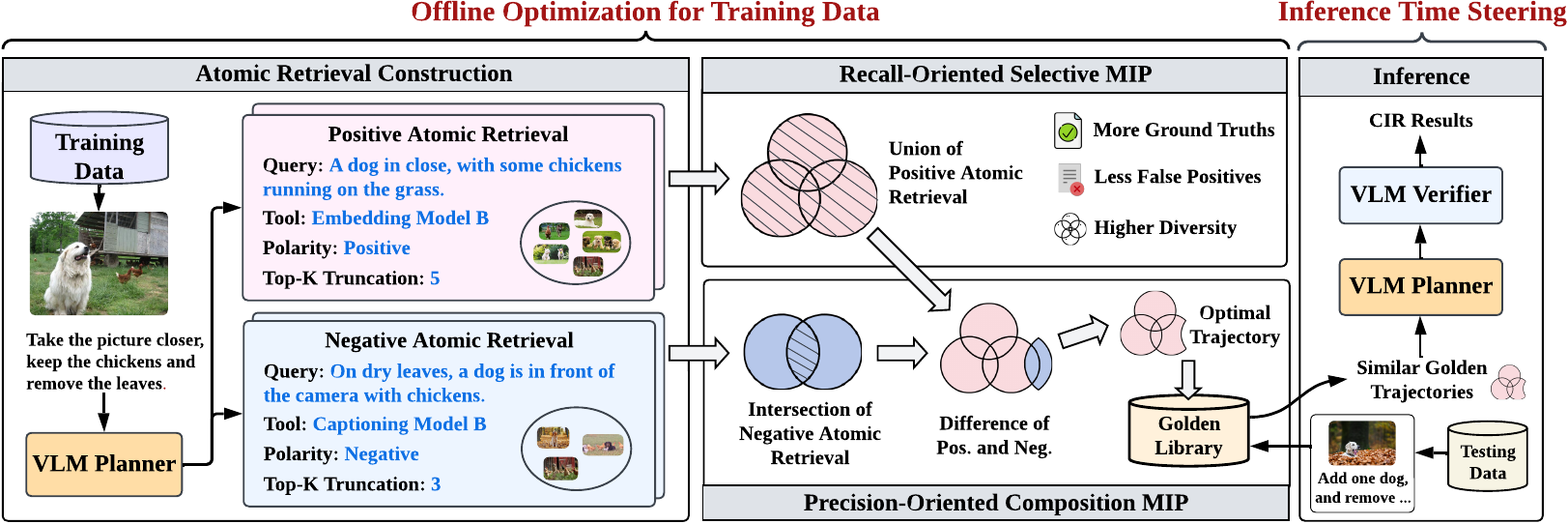}
    \vspace{-5pt}
    \caption{The overall framework of our proposed OSCAR. 
}
    \vspace{-10pt}
\label{fig:framework}
\end{figure*}
\section{Related Works}
\label{sec:related_work}

Composed image retrieval (CIR) aims to retrieve target images that satisfy specific textual modifications while preserving the visual context of a reference image.
This task demands fine-grained compositional reasoning over multimodal heterogeneous attributes with increasing semantic complexity~\cite{cirsurvey,circo}.

\textbf{Unified Embedding Retrieval for CIR}.
A dominant line of work formulates CIR by collapsing a composed query, which consists of a reference image and a modification text, into a single embedding, followed by nearest-neighbor retrieval.
Composition is typically achieved via cross-modal fusion, feature editing, or language-aligned representations in a shared latent space~\cite{clip4cir,pic2word}.
Moreover, recent large-scale benchmarks~\cite{vlm2vec} have driven rapid progress in multimodal embedding models for CIR, with a series of methods achieving increasingly strong performance~\cite{QQMM}. 
However, different models exhibit varying strengths across different domains and datasets, rendering the single-model myopia problem.

\textbf{Heuristic Agentic Retrieval for CIR.}
Recent agentic methods decompose CIR into multi-step workflows, such as query rewriting, retrieval, and correction~\cite{autocir}.
However, they are typically driven by heuristic exploration, often requiring repeated model interactions, incurring high computational cost, and lacking global optimality guarantees.
In contrast, our proposed OSCAR replaces this heuristic trial-and-error with optimization-steered planning, using an offline MIP formulation to guide the agent towards provably efficient and accurate retrieval trajectories.

More detailed related work is provided in Appendix~\ref{sec:related_work_full}.

\section{Methodology}
\label{sec:method}

In this section, we present \textbf{OSCAR}, an \underline{O}ptimization-\underline{S}teered Agentic Planning Framework for \underline{C}omposed Im\underline{a}ge \underline{R}etrieval. 
The overall framework of OSCAR is shown in Figure~\ref{fig:framework}.
All the prompts for VLM invocation are provided in Appendix~\ref{appendix:prompts}.

\subsection{Preliminaries}

\vspace{3pt}
\textbf{Problem Formulation}.
Let $q=\{q_{img}, q_{txt}\}$ denote a composed query, where $q_{img}$ is the reference image and $q_{txt}$ is the modification text.
The goal of CIR is to retrieve a set of ground-truth images $I^{+} \subset \mathcal{I}$ from an image gallery $\mathcal{I}$ based on the composed query $q$.

Standard approaches typically learn a monolithic retrieval model to perform CIR in a unified latent space, thereby suffering from single-model myopia. 
In response, agentic methods reframe these existing retrieval models as a set of \textbf{atomic tools}, denoted as $\mathcal{T} = \{f_1, f_2, ..., f_m\}$. 
Each tool accepts the composed query input and returns a set of candidate images. 
This formulation allows us to treat the retrieval process not as a single inference step, but as a \textit{compositional planning problem} over the space of available tools.

\vspace{5pt}
\noindent\textbf{From Heuristic Exploration to Global Optimization}.
Existing agentic methods typically navigate this tool space via \textit{heuristic exploration} (\eg, ReAct loops~\cite{react}), where an LLM/VLM iteratively decides the next step based on local context.
While flexible, they lack a global view of the solution space: agents may redundantly query overlapping concepts or fail to enforce strict exclusion constraints.
We observe that for any given query $q$ with \textit{known} ground truth $\mathcal{I}^{+}$, the ideal tool usage can be mathematically formulated as a \textbf{Set Cover}~\cite{setcover} variation: 
we seek a compact subset of atomic tool outputs whose union covers $\mathcal{I}^{+}$ while minimizing the overlap between this union and non-targets $\mathcal{I}^{-}=\mathcal{I}\setminus \mathcal{I}^+$.
This realization allows us to reframe the planning process not as a trial-and-error search, but as a MIP problem that explicitly solves for the global optimal trajectory.

\vspace{5pt}
\noindent\textbf{The OSCAR Framework}.
As illustrated in Figure~\ref{fig:framework}, since the ground-truth set is \textit{unknown} during the retrieval, OSCAR bridges the gap via a distinct \textit{offline-online} paradigm that steers the retrieval process using golden demonstrations derived from training data.

\begin{itemize}[leftmargin=10pt]
    \item \textbf{Offline Optimization}. 
    For each training sample, we first construct and perform \textit{atomic retrievals} by varying the query and top-$k$ truncation.
    Then, we optimize the global planning trajectory via two-stage MIPs: (1) a \textit{recall-oriented} MIP that identifies the optimal subset of atomic retrievals to maximize ground-truth coverage, and (2) a \textit{precision-oriented} MIP that combines rigorous set-theoretic operations (union, intersection, difference) to filter irrelevant images.
    The resulting optimal trajectories are stored in a \textit{golden library} to steer the online agentic planning for CIR.
    
    \item \textbf{Online Steering}. For a test query with unknown ground truth, we select optimal trajectories associated with similar queries from the golden library as in-context demonstrations.
    In this way, we effectively generalize the high-level logic derived offline and steer the agent toward near-optimal planning for CIR.
\end{itemize}

\subsection{Atomic Retrieval Construction}
\label{sec:method:candidate_query_generation}

To enable fine-grained planning, OSCAR first decomposes the complex user intent into a search space of discrete, executable actions.
We define an \textbf{atomic retrieval} $r$ as the fundamental unit of this space, formalized as a four-tuple:
\begin{equation}
    r = (f,\; \hat{q},\; p,\; k),
\end{equation}
where $f \in \mathcal{T}$ is the retrieval tool (\eg, a caption-based searcher or a multimodal encoder), $\hat{q}=\{q_{img},\hat{q}_{txt}\}$ is derived by rewriting the textual query, $p \in \{+, -\}$ indicates the polarity (positive for inclusion, negative for exclusion), and $k$ is the number of candidate images to return (\ie, top-$k$ truncation).
Each atomic retrieval $r$ represents an indivisible planning unit that returns a specific candidate image set $\mathcal{S}_r \subset \mathcal{I}$.

\textbf{Query Decomposition and Polarity}.
Rather than solely relying on the original query, we employ a VLM to generate a diverse set of rewritten queries.
The VLM is prompted to decompose the modification instructions into specific visual attributes or semantic constraints.
Crucially, each generated query $\hat{q}$ is assigned a polarity $p$. 
As illustrated in Figure~\ref{fig:framework}, positive queries ($p=+$) target attributes that must be present in the result (\eg, chickens), while negative queries ($p=-$) target attributes that must be explicitly excluded (\eg, leaves).
This decoupling allows the subsequent optimization stage to leverage boolean set logic for precise inclusion and exclusion, a capability that a single retrieval model lacks.

\textbf{Top-$k$ Truncation}.
The truncation parameter $k$ dictates the trade-off between recall and precision for a given tool $f$.
To enable the planning over different granularities of scope, we discretize the truncation parameter $k$ into a finite set of levels:
\begin{equation}
    \mathcal{K} = \{k_1, k_2, \dots, k_{max}\}.
\end{equation}
For a fixed tool $f$ and query $\hat{q}$, the retrieved results are deterministic and monotonic w.r.t. $k$ (\ie, $\mathcal{S}_{k_1} \subset \mathcal{S}_{k_2}$ for $k_1 < k_2$).
To ensure computational efficiency, we execute the retrieval once with the maximum truncation $k_{\max}$ and generate the smaller top-$k$ variants via slicing, avoiding redundant model inferences.

By taking the Cartesian product over available tools $f$, rewritten queries $\hat{q}$, polarities $p$, and truncations $k$, we construct a diverse global set of atomic retrievals, denoted as $\mathcal{R}$ (1,182 atomic retrievals per sample).
This set serves as the decision space for the subsequent offline optimization stages, where we select and compose the optimal subset of $\mathcal{R}$ to satisfy the global retrieval objective.

\subsection{Recall-Oriented Selection MIP}
\label{sec:method:stage1}

The first-stage recall-oriented MIP aims to identify a compact subset of \textit{positive} atomic retrievals $\mathcal{R}^{+}$ that maximizes the coverage of ground-truth images $\mathcal{I}^{+}$ while minimizing the inclusion of irrelevant noise $\mathcal{I}^{-}$.
This selection process effectively prunes the massive atomic retrieval space constructed in Section~\ref{sec:method:candidate_query_generation}, resulting in a recall-oriented candidate set of images $\mathcal{U}\subset \mathcal{I}$ for the subsequent fine-grained composition stage.

\vspace{5pt}
\noindent\textbf{Optimization Variables}.
We define binary decision variables $x_r \in \{0, 1\}$ to indicate whether a positive atomic retrieval $r \in \mathcal{R}^+$ is selected.
We further define auxiliary state variables, which are logically determined by the decision variable $x_r$, to track image coverage and tool diversity:
\begin{itemize}[leftmargin=10pt]
    \item $c_i \in \{0, 1\}$ indicates whether the image $i \in \mathcal{I}$ is covered by \textit{at least one} selected atomic retrieval.
    \item $t_f \in \{0, 1\}$ indicates whether the tool $f$ is used by \textit{at least one} selected atomic retrieval, \ie, active.
\end{itemize}
Moreover, to prevent tool redundancy, we group atomic retrievals into different \textit{families} $\mathcal{F}$. A family $F \in \mathcal{F}$ consists of retrievals that share the same tool $f$, query $\hat{q}$, and polarity $p$, differing only in their truncation threshold $k$.
Since larger $k$ values strictly subsume smaller ones (monotonicity), selecting multiple members from the same family provides diminishing returns.

\vspace{5pt}
\noindent\textbf{MIP Formulation}.
We formulate the selection problem as a MIP that balances three competing objectives: (1) maximizing the ground-truth coverage $\sum_{i\in\mathcal{I}^+} c_i / |\mathcal{I}^{+}|$, (2) minimizing the irrelevant noise $\sum_{i\in\mathcal{I}^{-}} c_i / |\mathcal{I}^{-}|$, and (3) encouraging the tool diversity $\sum_{f\in \mathcal{T}}t_f$.
The optimization formulation is written as:
\begin{align}
    \max_{\{x_r\}_{r\in\mathcal{R^{+}}}} \quad & \frac{w_R}{|\mathcal{I}^+|} \sum_{i\in\mathcal{I}^+} c_i - \frac{w_P}{|\mathcal{I}^-|} \sum_{i\in\mathcal{I}^-} c_i \notag \\ & \quad + \lambda_{\mathrm{div}} \sum_{f\in\mathcal{T}} t_f \label{eq:method stage1_obj}\\
    \text{s.t.} \quad & \sum_{r\in F} x_r \le 1, \quad \forall F\in\mathcal{F}, \label{eq:method stage1_family}\\
    & x_r,\, c_i,\, t_f \in \{0,1\}. \label{eq:method stage1_bin}
\end{align}
The objective function in Eq.~\ref{eq:method stage1_obj} prioritizes high recall ($w_R$) to ensure the ground truth is present in the candidate universe, while imposing a penalty ($w_P$) on expanding the search space with irrelevant images.
The term $\lambda_{\mathrm{div}}$ serves as a regularizer to prevent over-reliance on a single tool, \ie, single-model myopia.
The constraint in Eq.~\ref{eq:method stage1_family} enforces \textit{truncation exclusivity}: for any given tool and query configuration, the solver must pick at most one optimal $k$ threshold, preventing redundant coverage of nested subsets.

\vspace{5pt}
\noindent\textbf{Optimization Output}.
The solution to this MIP yields the optimal set of positive atomic retrievals, denoted as $\mathcal{R}^{+}_{*}$, which produces the recall-oriented candidate set of images $\mathcal{U} = \bigcup_{r \in \mathcal{R}^{+}_{*}} \mathcal{S}_r$.
While $\mathcal{U}$ ensures high coverage of the ground truth, it inevitably contains irrelevant images since we omit the negative atomic retrieval.
Hence, the next stage performs precision-oriented filtering to remove the remaining noise.

\subsection{Precision-Oriented Composition MIP}
\label{sec:method:stage2}

While the first stage ensures high recall, the resulting candidate image set $\mathcal{U}$ inevitably contains significant noise since the negative atomic retrievals are not considered yet.
Hence, we perform logical filtering and refine $\mathcal{U}$ by composing atomic retrievals into a rigorous boolean expression.
To ensure stability and interpretability, we restrict the search space to a fixed two-clause structure:
\begin{equation}
    \mathcal{S}_{final} \;=\;
    \underbrace{\Big(\bigcup_{r \in \mathcal{R}^{+}_{**}} \mathcal{S}_r\Big)}_{\text{Positive Union}}
    \;\setminus\;
    \underbrace{\Big(\bigcap_{r \in \mathcal{R}^{-}_{*}} \mathcal{S}_r\Big)}_{\text{Negative Intersection}}
    \label{eq:method stage2_structure}
\end{equation}
where $\mathcal{R}^{+}_{**}\subseteq \mathcal{R}^{+}_{*} \subseteq \mathcal{R}^+$ and $\mathcal{R}^{-}_{*} \subseteq \mathcal{R}^-$ denote the optimal subsets of positive and negative atomic retrievals selected by this stage.
$\mathcal{R}^{+}_{*}$ is the optimal solution from the previous stage.
Crucially, we employ \textbf{Intersection} for the negative clause to enforce \textit{conservative exclusion}: an image is removed only if \textit{all} selected negative tools agree it is irrelevant, thereby preventing accidental deletion of ground-truth samples due to single-tool hallucinations.
We further compare this conservative design with more aggressive negative
filtering strategies in Appendix~\ref{appendix:negative_filtering}.

\vspace{5pt}
\noindent\textbf{Optimization Variables}.
We define binary decision variables $x_r \in \{0, 1\}$ to indicate whether an atomic retrieval $r\in\mathcal{R}_{*}^{+} \cup \mathcal{R}^{-}$ is selected.
Based on Eq.~\ref{eq:method stage2_structure}, these decisions determine the state of each candidate image $i \in \mathcal{U}$ via three logically coupled variables:
\begin{itemize}[leftmargin=10pt]
    \item $u_i \in \{0, 1\}$ indicates if image $i$ is covered by the \textit{Positive Union}, \ie, retrieved by \textit{at least one} selected positive atomic retrieval.
    \item $v_i \in \{0, 1\}$ indicates if image $i$ falls into the \textit{Negative Intersection}, \ie, retrieved by \textit{all} selected negative atomic retrievals.
    \item $z_i \in \{0, 1\}$ indicates if image $i$ remains in the final set $\mathcal{S}_{final}$. Logically, $z_i = 1$ if and only if $u_i=1$ AND $v_i=0$ (set difference).
\end{itemize}

\vspace{5pt}
\noindent\textbf{MIP Formulation}.
The objective is to maximize the retention of ground-truth images in the final set $\mathcal{S}_{final}$. 
We introduce a regularization term $\lambda_{reg}$ to mitigate the false positives and thereby prevent trivial solutions, \eg, selecting no negative tools to maximize recall.
The MIP is formulated as follows:
\begin{align}
    \max_{\{x_r\}_{r\in \mathcal{R^{+}_{*}}\cup \mathcal{R}^{-}}} \quad & \sum_{i \in \mathcal{U} \cap \mathcal{I}^+} z_i - \lambda_{reg} \sum_{i \in \mathcal{I}^-} z_i
    \label{eq:method stage2_obj}\\
    \text{s.t.}\quad
    & \sum_{r \in \mathcal{R}^+_{*}} x_r \ge 1, \label{eq:method stage2_nonempty}\\
    & x_r, z_i \in \{0,1\}. \label{eq:method stage2_bin}
\end{align}
The constraint in Eq.~\ref{eq:method stage2_nonempty} ensures that at least one positive atomic retrieval is selected, enforcing the positive union to be non-empty.

\vspace{5pt}
\noindent\textbf{Optimization Output}.
The solution yields an optimal plan $(\mathcal{R}^{+}_{**}, \mathcal{R}^{-}_{*})$ consisting of the selected positive and negative atomic retrievals.
Together, they define an optimal trajectory of agentic composed image retrieval for the training sample, \ie, a sequence of specific tool calls associated with the rewritten queries and top-$k$ truncations, and the corresponding set operations leading to the ground truth.
These trajectories are stored in a \textit{Golden Library} for online steering during test-time inference for agentic composed image retrieval.
We discuss the design choice of the structured composition template and its comparison with a more expressive DNF formulation in Appendix~\ref{appendix:rationale_dnf}.

\subsection{Optimization-Steered Inference for CIR}
\label{sec:method:case_lib}

The optimization pipeline described in Sections~\ref{sec:method:stage1} and \ref{sec:method:stage2} is computationally intensive and relies on ground-truth availability, making it inapplicable during inference.
However, the trajectories produced by these stages represent \textit{golden demonstrations} of ideal planning logic.
To transfer this optimized reasoning to the CIR agent, we construct a \textbf{Golden Library} that turns offline insights into retrieval-augmented in-context demonstrations.

For each training instance, we employ \texttt{Qwen3-Embedding-8B} to encode the problem context, which is the concatenation of the modification query $q_{txt}$ and the \textit{caption} of reference image $q_{img}$. 
The caption is produced by \texttt{Qwen3-VL-32B}.
The resulting problem-context vector serves as the key for indexing, and the value is the corresponding two-stage MIP solution.
At test time, as shown in Figure~\ref{fig:framework}, given a composed query, we first compute its problem-context embedding and then retrieve the top-$N$ most similar cases from the golden library by cosine similarity.
These trajectories serve as the in-context demonstrations for the VLM planner, whose output is fed to the VLM verifier for final ranking.

Crucially, this process does not merely provide the answers to similar questions, but rather transfers \textbf{generalized planning logic}.
By observing how the optimal planner handled similar semantic structures, the planner learns to replicate the underlying reasoning strategy, \eg, selecting the appropriate tool types, determining the correct polarity for exclusion, and calibrating the truncation threshold $k$.
The robustness of this logic transfer is evidenced by our experiments: OSCAR achieves state-of-the-art performance even when the golden library is constructed using only \textbf{10\%} of the available training data, demonstrating that the system learns abstract meta-strategies rather than memorizing dataset-specific solutions.


We briefly discuss key design choices of OSCAR, with full implementation details and rigorous formulations provided in Appendix~\ref{appendix:method_discussion}.

\section{Experiment}
\label{sec:experiment}

\subsection{Experiment Setups}

\paragraph{Datasets and Metrics.}
Our main experiments are conducted on three representative public datasets for CIR: \textit{CIRCO}~\cite{circo}, \textit{CIRR}~\cite{cirr} and \textit{FashionIQ}~\cite{fashioniq}. We report Recall@$K$ on CIRR and FashionIQ, and mAP@$K$ on CIRCO since each CIRCO query has multiple ground-truth images, following standard practice~\cite{fire, autocir}. 
Detailed dataset statistics and splits are provided in Appendix~\ref{appendix:dataset_stats}.
\paragraph{Baselines.}
We compare OSCAR with four groups of baselines:
\begin{itemize}[leftmargin=10pt, topsep=1pt, itemsep=0pt, parsep=0pt]
    \item \textbf{Multimodal embedding models:} RzenEmbed-v2-7B~\cite{rzenemb}, VLM2Vec~\cite{vlm2vec}, B3-Qwen2-7B~\cite{B3},  Ops-MM-embedding-v1-7B~\cite{ops}, and QQMM-embed-v2~\cite{QQMM}.
    
    \item \textbf{Caption-based text embedding models:} Qwen3-VL-32B~\cite{Qwen3-VL} for captioning, with bge-m3~\cite{bge-m3} and Qwen3-Embedding (0.6B, 4B, 8B)~\cite{qwen3_embedding} for retrieval.
    
    \item \textbf{CIR-dedicated methods:} Pic2Word~\cite{pic2word}, SEARLE~\cite{circo}, SEARLE-XL-OTI~\cite{circo}, CIReVL~\cite{cirevl}, LinCIR~\cite{lincir}, LDRE~\cite{ldre}, and FiRE~\cite{fire}.
    
    \item \textbf{Agentic CIR methods:} MRA-CIR~\cite{MRA-CIR}, AutoCIR~\cite{autocir}, and $X^R$~\cite{xr}. OSCAR also belongs to this category.
\end{itemize}

\paragraph{Implementation Details}
\label{sec:exp:implementation}
Due to the page limitation, we provide the implementation details in Appendix~\ref{appendix:exp implementation}.

\begin{table*}[t]
\centering

\vspace{-8pt}
\resizebox{0.9\textwidth}{!}{
\renewcommand\arraystretch{1.03}
\begin{tabular}{ccccccc|cccc}
\toprule
\hline
\multicolumn{1}{c|}{} &                  & \multicolumn{1}{c|}{}                      & \multicolumn{4}{c|}{CIRCO}    & \multicolumn{4}{c}{CIRR}                           \\
\multicolumn{1}{c|}{\multirow{-2}{*}{Type}} &
  \multirow{-2}{*}{Method} &
  \multicolumn{1}{c|}{\multirow{-2}{*}{\begin{tabular}[c]{@{}c@{}}No Parameter\\ Update?\end{tabular}}} &
  m@5 &
  m@10 &
  m@25 &
  m@50 &
  R@1 &
  R@5 &
  R@10 &
  R@50 \\ \hline
\multicolumn{1}{c|}{} & Ops-MM-v1-7B     & \multicolumn{1}{c|}{\cmark} & 13.56 & 15.96 & 18.61 & 19.68 & 1.90        & 50.29 & 66.68       & 91.64          \\
\multicolumn{1}{c|}{} & RzenEmbed-v2-7B  & \multicolumn{1}{c|}{\cmark} & 32.20 & 34.19 & 37.41 & 38.61 & 19.30       & 70.36 & 83.08       & \textbf{96.77} \\
\multicolumn{1}{c|}{} & VLM2Vec          & \multicolumn{1}{c|}{\cmark} & 3.40  & 4.07  & 5.08  & 5.74  & 0.10        & 26.48 & 41.35       & 71.74          \\
\multicolumn{1}{c|}{} & B3\_Qwen2\_7B    & \multicolumn{1}{c|}{\cmark} & 3.67  & 4.55  & 5.53  & 6.13  & 0.80        & 37.95 & 54.39       & 81.01          \\
\multicolumn{1}{c|}{\multirow{-5}{*}{\begin{tabular}[c]{@{}c@{}}Multimodal\\ Embedding\end{tabular}}} &
  QQMM-embed-v2 &
  \multicolumn{1}{c|}{\cmark} &
  {\ul 45.92} &
  {\ul 47.13} &
  {\ul 50.39} &
  {\ul 51.45} &
  28.98 &
  73.66 &
  82.98 &
  {\ul 96.72} \\ \hline
\multicolumn{1}{c|}{} & bge-m3           & \multicolumn{1}{c|}{\cmark} & 8.15  & 8.61  & 9.62  & 10.21 & 12.53       & 34.00 & 48.63       & 75.06          \\
\multicolumn{1}{c|}{} & Qwen3-Embed-0.6B & \multicolumn{1}{c|}{\cmark} & 9.55  & 10.35 & 11.61 & 12.33 & 14.87       & 39.11 & 54.53       & 82.51          \\
\multicolumn{1}{c|}{} & Qwen3-Embed-4B   & \multicolumn{1}{c|}{\cmark} & 13.55 & 14.60 & 16.33 & 17.20 & 22.17       & 51.57 & 64.84       & 88.19          \\
\multicolumn{1}{c|}{\multirow{-4}{*}{\begin{tabular}[c]{@{}c@{}}Text\\ Embedding\end{tabular}}} &
  Qwen3-Embed-8B &
  \multicolumn{1}{c|}{\cmark} &
  16.45 &
  17.23 &
  18.98 &
  19.88 &
  23.21 &
  52.48 &
  66.51 &
  89.71 \\ \hline
\multicolumn{1}{c|}{} & Pic2Word         & \multicolumn{1}{c|}{\xmark} & 8.72  & 9.51  & 10.46 & 11.29 & 23.90       & 51.70 & 65.30       & 87.80          \\
\multicolumn{1}{c|}{} & SEARLE           & \multicolumn{1}{c|}{\xmark} & 11.68 & 12.73 & 14.33 & 15.12 & 24.24       & 52.48 & 66.29       & 88.84          \\
\multicolumn{1}{c|}{} & SEARLE-XL-OTI    & \multicolumn{1}{c|}{\xmark} & 10.18 & 11.03 & 12.72 & 13.67 & 24.87       & 52.31 & 66.29       & 88.58          \\
\multicolumn{1}{c|}{} & CIReVL           & \multicolumn{1}{c|}{\cmark} & 18.57 & 19.01 & 20.89 & 21.80 & 24.55       & 52.31 & 64.92       & 86.34          \\
\multicolumn{1}{c|}{} & LinCIR           & \multicolumn{1}{c|}{\xmark} & 12.59 & 13.58 & 15.00 & 15.85 & 25.04       & 53.25 & 66.68       & -              \\
\multicolumn{1}{c|}{} & LDRE             & \multicolumn{1}{c|}{\cmark} & 23.35 & 24.03 & 26.44 & 27.50 & 26.53       & 55.57 & 67.54       & 88.50          \\
\multicolumn{1}{c|}{\multirow{-7}{*}{\begin{tabular}[c]{@{}c@{}}CIR\\ Dedicated\end{tabular}}} &
  FiRE &
  \multicolumn{1}{c|}{\xmark} &
  31.03 &
  32.08 &
  34.40 &
  35.50 &
  43.33 &
  {\ul 74.02} &
  83.51 &
  95.83 \\ \hline
\multicolumn{1}{c|}{} & MRA-CIR          & \multicolumn{1}{c|}{\xmark} & 27.14 & 28.85 & 31.54 & 32.63 & 37.98       & 67.45 & 78.07       & 93.98          \\
\multicolumn{1}{c|}{} & AutoCIR          & \multicolumn{1}{c|}{\cmark} & 24.05 & 25.14 & 27.35 & 28.36 & 31.81       & 61.95 & 73.86       & 92.07          \\
\multicolumn{1}{c|}{} & $X^R$            & \multicolumn{1}{c|}{\cmark} & 31.38 & 32.88 & 35.46 & 36.50 & {\ul 43.13} & 73.59 & {\ul 83.09} & 94.05          \\
\multicolumn{1}{c|}{\multirow{-4}{*}{Agentic}} &
  \textbf{OSCAR (Ours)} &
  \multicolumn{1}{c|}{\cmark} &
  \textbf{56.54} &
  \textbf{58.53} &
  \textbf{61.92} &
  \textbf{62.67} &
  \textbf{51.18} &
  \textbf{79.50} &
  \textbf{87.45} &
  96.56 \\ \hline
\rowcolor{purple!15}
\multicolumn{3}{c|}{\cellcolor{purple!15}Relative Improvement (\%)}                 &23.13\% & 24.19\% & 22.88\% & 21.81\% & 18.67\%       & 7.40\%  & 5.25\%        & -0.22\%          \\ \hline
   \bottomrule
\end{tabular}}
\caption{Performance comparison on \textit{CIRCO} and \textit{CIRR} datasets. 
m@$K$ and R@$K$ denote mAP@$K$ and Recall@$K$, respectively. 
Best results are highlighted in \textbf{bold}, while the second best are underlined. Relative improvement is calculated against the best baseline result. 
``-'' means that the result is not reported in the original paper. }
\label{tab:circo_cirr_main_table}

\end{table*}

\begin{table}[ht]
\centering

\vspace{-8pt}

\resizebox{0.49\textwidth}{!}{
\renewcommand\arraystretch{1.1}
\begin{tabular}{c|cc|cc|cc|cc}
\toprule
\hline
\multirow{2}{*}{Method} & \multicolumn{2}{c|}{Dress}      & \multicolumn{2}{c|}{Shirt}      & \multicolumn{2}{c|}{Toptee}     & \multicolumn{2}{c}{Average}     \\
 &
  \multicolumn{1}{c}{R@10} &
  \multicolumn{1}{c|}{R@50} &
  \multicolumn{1}{c}{R@10} &
  \multicolumn{1}{c|}{R@50} &
  \multicolumn{1}{c}{R@10} &
  \multicolumn{1}{c|}{R@50} &
  \multicolumn{1}{c}{R@10} &
  \multicolumn{1}{c}{R@50} \\ \hline
Ops-MM-v1-7B            & 19.39          & 38.13          & 31.45          & 48.87          & 27.69          & 46.51          & 26.18          & 44.50          \\
RzenEmbed-v2-7B         & {\ul 37.38}    & {\ul61.97}          & {\ul 45.63}    & 64.97          & {\ul 46.56}    & 67.57          & {\ul 43.19}    & {\ul 64.84}    \\
VLM2Vec                 & 4.76           & 14.77          & 15.60          & 30.62          & 11.37          & 22.95          & 10.58          & 22.78          \\
B3\_Qwen2\_7B           & 8.53           & 22.31          & 19.53          & 34.49          & 14.99          & 29.88          & 14.35          & 28.89          \\
QQMM-embed-v2           & 36.44          & 60.09          & \textbf{46.07} & {\ul 65.65}          & 46.35          & {\ul 68.49}    & 42.95          & 64.74          \\ \hline
bge-m3                  & 10.81          & 23.75          & 20.66          & 33.66          & 15.96          & 27.33          & 15.81          & 28.25          \\
Qwen3-Embed-0.6B        & 9.32           & 19.83          & 21.00          & 34.69          & 15.96          & 27.84          & 15.43          & 27.45          \\
Qwen3-Embed-4B          & 12.69          & 27.02          & 26.69          & 40.68          & 21.88          & 36.05          & 20.42          & 34.58          \\
Qwen3-Embed-8B         & 12.64          & 29.20           & 28.70          & 43.13          & 23.41          & 36.97          & 21.58          & 36.43          \\ \hline
Pic2Word                & 20.20          & 40.20          & 26.20          & 43.60          & 27.90          & 47.40          & 24.77          & 43.73          \\
SEARLE                  & 20.48          & 43.13          & 26.89          & 45.58          & 29.32          & 49.97          & 25.56          & 46.23          \\
SEARLE-XL-OTI           & 21.57          & 44.47          & 30.37          & 47.49          & 30.90          & 51.76          & 27.61          & 47.91          \\
CIReVL                  & 24.79          & 44.76          & 29.49          & 47.40          & 31.36          & 53.65          & 28.55          & 48.60          \\
LinCIR                  & 20.92          & 42.44          & 29.10          & 46.81          & 28.81          & 50.18          & 26.28          & 46.48          \\
LDRE                    & 22.93          & 46.76          & 31.04          & 51.22          & 31.57          & 53.64          & 28.51          & 50.54          \\
FiRE                    & 29.60          & 50.87          & 39.84          & 60.06          & 35.64          & 57.83          & 35.03          & 56.25          \\ \hline
MRA-CIR &
  \multicolumn{1}{c}{31.87} &
  \multicolumn{1}{c|}{54.23} &
  \multicolumn{1}{c}{40.43} &
  \multicolumn{1}{c|}{60.20} &
  \multicolumn{1}{c}{41.25} &
  \multicolumn{1}{c|}{62.51} &
  \multicolumn{1}{c}{37.85} &
  \multicolumn{1}{c}{58.98} \\
AutoCIR                 & 24.94          & 45.81          & 34.00          & 53.43          & 33.10          & 55.58          & 30.68          & 51.61          \\
$X^R$ &
  \multicolumn{1}{c}{28.71} &
  \multicolumn{1}{c|}{52.50} &
  \multicolumn{1}{c}{38.91} &
  \multicolumn{1}{c|}{56.82} &
  \multicolumn{1}{c}{43.91} &
  \multicolumn{1}{c|}{62.57} &
  \multicolumn{1}{c}{37.18} &
  \multicolumn{1}{c}{57.30} \\
\textbf{OSCAR (Ours)}    & \textbf{38.47} & \textbf{65.15} & 44.50          & \textbf{67.52} & \textbf{48.24} & \textbf{71.24} & \textbf{43.73} & \textbf{67.97} \\ 
\hline
\rowcolor{purple!15}
\multicolumn{1}{c|}{\cellcolor{purple!15}Rel. Improv. (\%)}                 &2.92\% & 5.13\% & -3.40\% & 2.84\% & 3.61\%       & 4.02\%  & 1.25\%       & 4.82\%          \\ \hline

\hline
   \bottomrule

\end{tabular}}

\caption{Performance comparison on \textit{FashionIQ} dataset.}
\label{tab:fashioniq_main_table}

\end{table}

\subsection{Main Results}
Table~\ref{tab:circo_cirr_main_table} and Table~\ref{tab:fashioniq_main_table} report the main results on standard CIR benchmarks. 
We also provide Recall@$k$ performance between the baselines and our methods on CIRCO dataset in Appendix ~\ref {appendix:circo_recall}. 
From the tables we can have the following observations:

\begin{table}[t]
\centering

\vspace{-8pt}
\resizebox{0.48\textwidth}{!}{
\renewcommand\arraystretch{1.2}
\begin{tabular}{c|ccccccc|cccccc}
\toprule
\hline
                           & \multicolumn{2}{c}{CIRCO} & \multicolumn{2}{c}{CIRR} & \multicolumn{2}{c}{FIQ.Avg} \\
\multirow{-2}{*}{Variants} & m@25         & m@50        & R@10         & R@50        & R@10      & R@50            \\ \hline
OSCAR (Ours)                    & \textbf{61.92} & \textbf{62.67} & \textbf{87.45}               & \textbf{96.56}               & 43.73          & \textbf{67.97} \\ 
w/o Demo.               & {\ul 59.72}    & {\ul 59.84}    & {75.01} & {82.02} & {\ul 46.57}    & 61.62          \\ 
w/o Set.Diff.             & 59.54        & 59.62       &  {\ul 87.32}           &   {\ul 93.80}          & 46.46     & {\ul 62.08}     \\ 
w/o Set.Diff. \& Demo. & 58.63          & 58.66          & 39.11                        & { 54.53} & \textbf{49.44} & 53.96          \\ \hline
   \bottomrule
\end{tabular}}
\caption{Ablation study on different variants of OSCAR. \textit{FIQ.Avg} denotes average results over FashionIQ subsets. }
\label{tab:ablation_all_dataset}

\end{table}

\begin{itemize}[leftmargin=10pt]
    \item \textbf{Overall Performance.} Generally, OSCAR achieves the best performance across all datasets, consistently outperforming other baseline methods, with demonstrations derived from only \textbf{10\%} training data. Notably, OSCAR is a training-free framework based on open-sourced models and requires no parameter update, even surpassing other baselines that are based on domain-specific-tuning or close-sourced LLMs.

    \item \textbf{Comparison with single-embedding methods.}
    OSCAR consistently outperforms both multimodal and text embedding baselines, showing that optimization-steered planning and structured tool composition are more effective than relying on a single unified representation.


\item \textbf{Comparison with CIR-dedicated methods.}
OSCAR outperforms CIR-dedicated methods, while our framework does not rely on specialized fusion, rewriting, or learned transformation modules.
This suggests that explicit planning over general-purpose tools can be a reusable alternative to dataset-specific engineering.

 \item \textbf{Comparison with agentic approaches.}
OSCAR consistently outperforms prior agentic retrieval methods, while avoiding iterative agent interaction or multi-round reasoning.
Instead of relying on heuristic exploration, OSCAR guides VLMs with optimization-derived trajectories from the golden library, leading to more effective tool-call planning.
\end{itemize}

%

\subsection{In-depth Analysis}

\subsubsection{Ablation Study}
Table~\ref{tab:ablation_all_dataset} analyzes the impact of golden trajectory demonstrations and set-theoretic operations in OSCAR, from which we can have the following observations:
\begin{itemize}[leftmargin=10pt]
    \item \textbf{Trajectory demonstrations}.
    Removing the optimal trajectory demonstrations at inference time (\textbf{w/o Demo.}) generally degrades the performance, indicating that the optimization-derived trajectories provide valuable signals about how retrieval tools should be selected and composed.
    Moreover, the golden library implicitly captures dataset- and query-specific patterns, including query intent styles, preferred composition strategies, and top-$k$ selection biases, enabling the planner to adapt its trajectory planning across distributions.
    
    \item \textbf{Set-theoretic operations}.
    Performance degrades substantially when removing the set-difference or demo guidance. In particular, removing both makes the process close to a simple union of retrieved results, which is akin to multi-channel retrieval. As shown in Table~\ref{tab:ablation_all_dataset}, this variant generally performs poorly, with particularly large drops at Recall@50 across datasets. Naively aggregating outputs from multiple atomic retrievals is insufficient, as union-only composition cannot filter out items that match negative attributes or properly control the candidate scope. 
    
\end{itemize}

\begin{table}[t]
\centering

\resizebox{0.48\textwidth}{!}{
\begin{tabular}{c|cccc}
\toprule
\hline
Model & m@25 & m@50 & R@25 & R@50 \\ \hline

QQMM-embed-v2 & 50.39 & 51.45 & 90.25 & 95.26 \\ 
Qwen3-Embed-8B & 18.98 & 19.88 & 53.12 & 65.62 \\ 
FiRE & 34.40 & 35.50 & - & - \\ 
$X^{R}$ & 35.46 & 36.50 & - & - \\ 
\hline

Qwen3-VL-4B & 59.22 & 59.33 & 88.62 & 88.75 \\
 w/ OSCAR & \textbf{61.18} & \textbf{61.91} & \textbf{93.75} & \textbf{96.12} \\
 \rowcolor{purple!15} Rel.Improv. (\%) & 3.31\% & 4.35\% & 5.79\% & 8.30\% \\ 
\hline

Qwen3-VL-8B & 59.22 & 59.31 & 88.12 & 88.25 \\
 w/ OSCAR & \textbf{61.65} & \textbf{62.40} & \textbf{94.88} & \textbf{97.12} \\
 \rowcolor{purple!15} Rel.Improv. (\%) & 4.10\% & 5.21\% & 7.68\% & 10.05\% \\ 
\hline

Qwen3-VL-32B & 59.72 & 59.84 & 86.88 & 87.00 \\ 
 w/ OSCAR & \textbf{61.92} & \textbf{62.67} & \textbf{94.62} & \textbf{97.50} \\
\rowcolor{purple!15} Rel.Improv. (\%) & 3.68\% & 4.73\% & 8.91\% & 12.07\% \\ 
\hline

InternVL3.5-38B & 60.29 & 60.62 & 90.88 & 91.75 \\
 w/ OSCAR & \textbf{61.20} & \textbf{61.58} & \textbf{92.38} & \textbf{93.75} \\ 
\rowcolor{purple!15} Rel.Improv. (\%) & 1.51\% & 1.58\% & 1.65\% & 2.18\% \\ 
\hline
\bottomrule
\end{tabular}}
\caption{Performance of different VLM backbones on \textit{CIRCO} dataset.}
\label{tab:backbone}
\vspace{-2mm}
\end{table}

\subsubsection{Generalization of OSCAR}
\label{appendix:generalization}

We further investigate the generalization of our proposed OSCAR in terms of different VLM planner backbones. 
We evaluate with Qwen3-4B/8B/32B and InternVL3.5-38B on the CIRCO dataset, and report the results in Table~\ref{tab:backbone}. We obtain the following observations:
\begin{itemize}[leftmargin=10pt]
    \item OSCAR generalizes well across different VLM backbones and consistently delivers substantial performance gains. This suggests that our offline-derived golden library captures broadly applicable planning logic, enabling OSCAR to function as a training-free, plug-and-play framework with strong model compatibility.
    \item OSCAR’s gains over the VLM backbone largely rely on the VLM’s agentic reasoning and tool-calling abilities. Accordingly, a stronger VLM (\eg, Qwen3-VL-32B) may better exploit the golden library and yield larger improvements. This trend is also evident in our preliminary experiments with Qwen2.5-VL~\cite{qwen25vl} and MiniCPM-V~\cite{minicpmv}: models with weaker tool-calling capabilities often fail to follow tool-use instructions and cannot reliably execute the required calls, leading to degraded performance.
\end{itemize}

\subsection{Additional Analysis}
\label{sec:additional_analysis}
Due to the page limitation, we move all other experiments to Appendix G,
including the case study, robustness to the number of demonstrations,
industrial-gallery evaluation, CIRCO Recall evaluation, efficiency and
scalability analysis, OOD generalization analysis, tool combination analysis,
and the controlled experiment.

\section{Conclusion}

We present OSCAR, an optimization-steered planning framework that reframes agentic composed image retrieval from a heuristic search process into a principled trajectory planning problem.
By solving a two-stage MIP formulation offline, OSCAR mathematically derives optimal planning strategies, including query rewriting, truncation selection, and boolean set composition, without human annotation.
These optimal trajectories are stored as golden demonstrations to effectively steer the VLM planner during inference to replicate complex reasoning logic in a single pass.
Extensive experiments on three public benchmarks and private industrial user galleries demonstrate that OSCAR consistently outperforms SOTA baselines, offering both superior accuracy and robust generalization.
Future work will explore extending this optimization-steered paradigm to other complex reasoning tasks, such as multi-hop question answering and open-ended tool use.


\section*{Limitations}
OSCAR is designed for retrieval tasks, especially those that can be naturally decomposed into atomic retrievals and set-theoretic composition. 
For non-retrieval tasks, applying OSCAR would require redefining the action space, optimization objective, and composition logic, which is beyond the scope of this work.

\section*{Acknowledgments}
This paper is supported by National Natural Science Foundation of China (624B2096, 72595872, 72542012, 62322603).

\bibliography{custom}

\appendix
\newpage
\appendix

\clearpage
\section{Extended Related Works}
\label{sec:related_work_full}

\subsection{Unified Embedding Retrieval for CIR}

TIRG~\cite{fusion1} introduces a gated residual composition mechanism, where the reference image embedding is modified via text-conditioned residual and gating functions.
Building on VLMs, ~\citet{fusion2} propose a conditioned composition framework that combines CLIP-based visual and textual features through a learned fusion module.
CLIP4CIR~\cite{clip4cir} builds upon CLIP by fine-tuning the vision--language encoders and learning a dedicated combiner network to fuse the reference image and modification text into a unified retrieval representation.
ARTEMIS~\cite{artemis} focuses on relative image descriptions and contrastive supervision to improve sensitivity to semantic changes, while still collapsing the composed query into a unified embedding.
DC-Net~\cite{dc-net} adopts a dual-branch transformation strategy, separately processing visual and textual features before fusing them into a single retrieval representation.

Pic2Word~\cite{pic2word} projects image features into the textual embedding space as pseudo-word tokens, enabling image--text composition through pretrained text encoders.
SEARLE~\cite{searle} further extends this idea by learning task-specific textual inversion or composition modules that encode visual modifications implicitly within a single embedding.
Context-I2W~\cite{context-i2w} further maps images to context-dependent pseudo-word tokens conditioned on the modification intent, improving zero-shot composed retrieval under diverse edits.
FTI4CIR~\cite{fit4cir} introduces fine-grained textual inversion by mapping an image into a subject-oriented token together with multiple attribute-oriented tokens, capturing finer visual details.

Beyond explicit composition mechanisms, some works focus on improving embedding learning strategies for CIR without altering the underlying retrieval formulation.
~\citet{knowledge_enhanced} enrich pseudo-word token mapping with an external database and an additional concept-alignment stream, improving fine-grained semantic alignment in the text embedding space.
In contrast to token-based mapping, SEIZE~\cite{semantic} adopts a training-free textual reasoning pipeline that generates diverse captions and leverages LLM-based compositional reasoning before retrieval.

Some works improve embedding learning for CIR without altering the composition mechanism.
LinCIR~\cite{lincir} adopts a language-only self-supervised training strategy for zero-shot CIR, while FiRE~\cite{fire} fine-tunes VLMs to enhance multimodal retrieval representations.

Within the unified embedding paradigm,
CIReVL~\cite{cirevl} proposes a training-free framework following this paradigm, while LDRE~\cite{ldre} explores LLM-based divergent reasoning and ensemble-style aggregation to improve coverage of possible semantic interpretations.
Caption-based approaches make intermediate reasoning steps explicit in the language space; however, their effectiveness depends heavily on the quality of caption generation and rewriting, and may suffer from information loss when compressing visual content into text.

Recent advances in large-scale multimodal embedding models have substantially improved zero-shot retrieval performance. However, evaluations such as MMEB~\cite{vlm2vec,vlm2vec_v2} show that no single embedding consistently performs well across diverse CIR scenarios; instead, models such as QQMM-embed~\cite{QQMM}, Rzenembed~\cite{rzenemb}, Ops-MM-embedding~\cite{ops} and VLM2Vec~\cite{vlm2vec} display complementary strengths, motivating the use of multiple embedding tools rather than a single retriever.

Despite architectural and training differences, all these approaches ultimately reduce CIR to the unified-embedding model search, where complex compositional constraints such as inclusion and exclusion are implicitly encoded in a continuous representation.

\subsection{Heuristic Agentic Retrieval for CIR}

Recent work has explored agentic formulations for CIR, where the retrieval process is decomposed into multiple steps with specialized roles (\eg, planning, retrieval, and correction), often involving iterative refinement or feedback.
AutoCIR~\cite{autocir} exemplifies this approach by structuring CIR as a multi-component pipeline that iteratively revises queries and retrieval results.
MRA-CIR~\cite{MRA-CIR} further advances this paradigm by introducing a multimodal reasoning agent that explicitly decouples intent reasoning from retrieval execution, bridging LLM-based logic and retrieval tools.
~\citet{Reason-before-retrieve} propose a reflective reasoning-before-retrieval paradigm for training-free zero-shot CIR, while $X^R$~\cite{xr} introduces a cross-modal agent framework that coordinates specialized tools through synergistic planning across visual and textual modalities.
While this line of work demonstrates the potential of multi-step reasoning, including generating alternative hypotheses and correcting failure cases, such agentic pipelines are typically iterative and computationally expensive due to repeated model interactions. 
Furthermore, these methods largely rely on heuristic exploration rather than explicit planning with optimality guarantees, leaving open the challenge of how to formalize and optimize tool execution under explicit global constraints.

\subsection{Optimization-Based Planning}
From a different perspective, Operations Research formulations model decision-making problems by explicitly defining decision variables, constraints, and objective functions, and solving them via combinatorial optimization such as MIP~\cite{mip1,mip2,bpp-search,mlprompt,llm4mip}.
This paradigm enables principled and interpretable decision-making with explicit optimization objectives, rather than relying on heuristic or trial-and-error search.

Our work adopts this optimization perspective by formulating tool-call trajectory generation for CIR as a two-stage MIP problem.
Each atomic retrieval corresponds to a discrete decision variable, while inclusion, exclusion, and composition constraints are explicitly modeled through set-level constraints.
This formulation allows retrieval trajectories to be generated with explicit optimality objectives, distinguishing our approach from heuristic agentic planning.

\section{Additional Method Discussions}
\label{appendix:method_discussion}

We clarify key design choices and implementation details to ensure the reproducibility and scientific rigor of the OSCAR framework.

\vspace{3pt}
\noindent\textbf{Rigorous Enforcement of Logical Dependencies}.
In Sections~\ref{sec:method:stage1} and \ref{sec:method:stage2}, we described the relationships between primary decision variables (\ie, $x_r$) and state variables (\eg, $c_i, t_f$, $z_i$) using logical implications to prioritize conceptual clarity.
In our actual implementation, these logical dependencies are strictly enforced via standard linear constraints (\eg, Big-M formulations and logical inequalities).
The complete, mathematically rigorous formulations, including all auxiliary constraints and bounds, are provided in Appendix~\ref{appendix:mip_details}.

\vspace{3pt}
\noindent\textbf{Rationale for Two-Stage Optimization}.
Theoretically, the recall and precision objectives could be unified into a monolithic MIP that solves for the global optimum in a single stage.
However, in practice, a single-stage formulation suffers from a severe \textit{combinatorial explosion}, rendering it computationally intractable for large-scale datasets.
Our two-stage design adopts a \textit{prune-and-refine} strategy: the first MIP acts as a high-recall filter to reduce the entire search space to a manageable candidate set $\mathcal{U}$, making the second composition MIP computationally tractable.

\vspace{3pt}
\noindent\textbf{Negative Results and Failed Explorations}.
To provide a comprehensive view of the problem landscape and benefit the research community, we include a detailed discussion of negative results in the Appendix~\ref{appendix:stage2_f1}, such as optimal-but-inefficient formulation for composition MIP and alternative trajectory retrieval strategies.
By documenting these explorations, we aim to highlight not only the effective design of OSCAR but also the non-trivial challenges inherent in optimizing agentic retrieval trajectories.

\begin{figure*}[ht]
  \centering
  \includegraphics[width=0.99\linewidth]{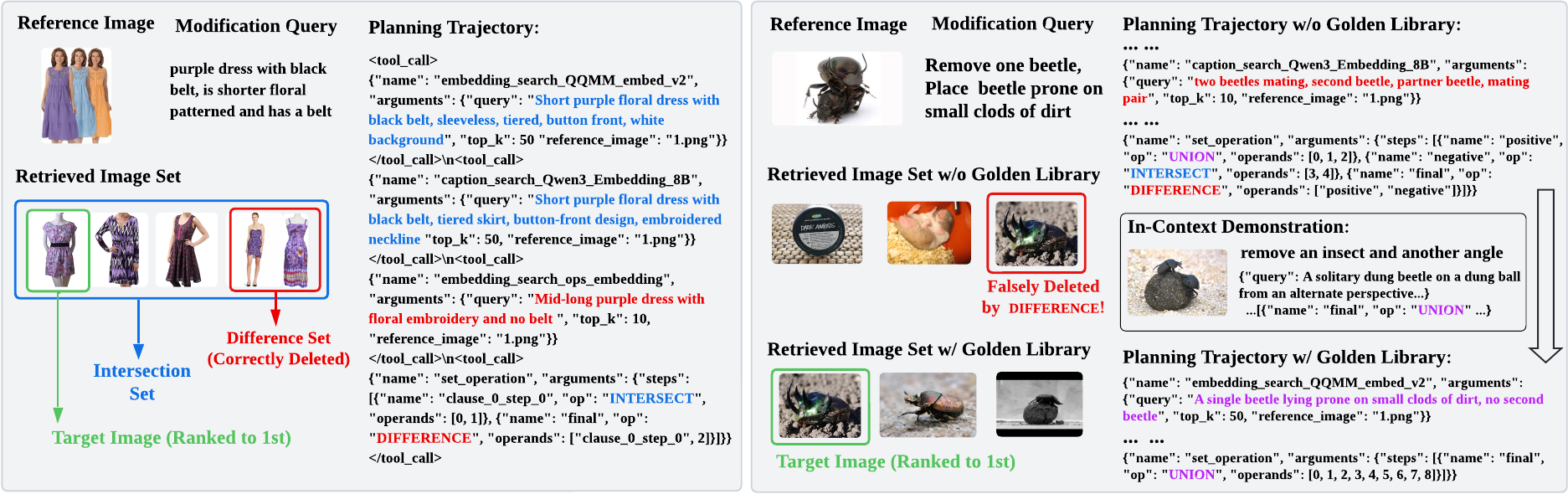}
  \vspace{-2mm}
  \caption{Case Studies on \textit{FashionIQ} (left) and \textit{CIRR} (right) datasets. The left part shows the correct tool call trajectory with the ground truth image ranked to the first place. The right part illustrates the effectiveness of golden library, with whose help the agent can avoid previous wrong tool calls and finally retrieve the ground truth image.}
  \label{fig:case_study}
\end{figure*}

\section{Dataset Statistics and Splits}
\label{appendix:dataset_stats}
Table~\ref{tab:dataset_info} summarizes the statistics of the datasets used in our experiments. Specifically, CIRR is split into a training set, a validation set, and a testing set, while CIRCO has no training set. FashionIQ is composed of three subsets (dress, shirt and toptee), each containing a training set and a validation set. Following previous works~\cite{autocir}, we evaluate on the FashionIQ validation set, and on the CIRR and CIRCO testing sets through the official remote evaluation server. We leverage the validation set of CIRCO, the training set of CIRR and FashionIQ to build the golden library (only 10\% of them). 

\begin{table}[t]
\centering

\resizebox{0.45\textwidth}{!}{
\renewcommand\arraystretch{0.96}
\begin{tabular}{cc|ccccc}
\toprule
\hline
\multirow{2}{*}{Split} & \multirow{2}{*}{Num} & \multirow{2}{*}{CIRCO} & \multirow{2}{*}{CIRR} & \multicolumn{3}{c}{FashionIQ} \\
                       &       &       &        & Dress & Shirt & Toptee \\ \hline
\multirow{2}{*}{Training} & \#Query & -     & 28,225 & 5,985 & 5,988 & 6,027  \\
                       & \#Image & -     & 16,939 & 11,452 & 19,036 & 16,121  \\ \hline
\multirow{2}{*}{Validation}   & \#Query & 220   & 4,181  & 2,017 & 2,038 & 1,961  \\
                       & \#Image & 123,403 & 2,297  & 3,817 & 6,346 & 5,373  \\ \hline
\multirow{2}{*}{Testing}  & \#Query & 800   & 4,148  & -     & -     & -      \\
                       & \#Image & 123,403   & 2,315 & -     & -     & -      \\ \hline
   \bottomrule
\end{tabular}}
\vspace{-5pt}

\caption{The dataset statistics.}
\label{tab:dataset_info}
\vspace{-10pt}
\end{table}

\section{Implementation Details}
\label{appendix:exp implementation}
For the tool planning process, we mainly use Qwen3-VL-32B~\cite{Qwen3-VL} as the planner agent. We adopt COPT~\cite{copt} as the MIP solver and
 discretize the top-$k$ parameter described in  Section~\ref{sec:method:candidate_query_generation} into the range from 5 to 50 with a step size of $5$. The detailed prompt for the planner is demonstrated in Appendix~\ref{appendix:prompts}. 
 We adopt Qwen3-VL-32B as the verifier agent to perform a binary relevance judgment between the candidates and queries. A detailed explanation of this binary scoring formulation is provided in Appendix~\ref{appendix:yesno_scoring}.

 For the tools used by the planner, we select Ops-MM-v1-7B~\cite{ops}, RzenEmbed-v2-7B~\cite{rzenemb}, QQMM-embed-v2~\cite{QQMM} as embedding-based retrieval tools, and bge-m3~\cite{bge-m3}, Qwen3-Embed-4B~\cite{qwen3_embedding},  Qwen3-Embed-8B~\cite{qwen3_embedding} as caption-based retrieval tools. In addition, we introduce a dedicated tool that performs structured set operations on retrieved result sets (details are provided in Appendix~\ref{appendix:set_operation}).
All retrieval and composition tools are deployed via MCP servers~\cite{mcp}. All experiments are conducted on A100 GPUs. 

To emphasize generalization, we construct the golden library using only a limited
subset of the available training data: specifically, we generate 220 cases for CIRCO,
and sample 10\% of the training set to construct cases for CIRR and FashionIQ.

\section{Analysis of Negative Filtering}
\label{appendix:negative_filtering}

\subsection{Statistics of Candidate Pruning}
\label{appendix:candidates_pruning}
Table~\ref{tab:diff_stats} reports the average number of candidates removed by the difference operation across different datasets.
When applied appropriately, the difference operation can effectively reduce the size of the candidate set, thereby alleviating the computational burden on the verifier.

\begin{table}[h]
\centering

\renewcommand\arraystretch{1.1}
\begin{tabular}{lccc}
\toprule
 Dataset  & Avg. Removed \\
\midrule
 CIRR  & 10.20 \\
 CIRCO  & 3.69 \\
 FashionIQ-Shirt & 3.67 \\
 FashionIQ-Dress& 3.05 \\
FashionIQ-Toptee & 2.51 \\
\bottomrule
\end{tabular}
\caption{Statistics of candidate pruning by the difference operation.
Avg. Removed denotes the average number of candidates excluded when difference is used.
}
\vspace{-5pt}

\label{tab:diff_stats}
\end{table}

\begin{table}[t]
\centering

\resizebox{0.48\textwidth}{!}{
\renewcommand{\arraystretch}{1.1}
\begin{tabular}{c|cc|cc|cc}
\hline
\multirow{2}{*}{Negative Filtering Strategy}
& \multicolumn{2}{c|}{CIRCO}
& \multicolumn{2}{c|}{CIRR}
& \multicolumn{2}{c}{FIQ.Avg} \\
& m@25 & m@50
& R@10 & R@50
& R@10 & R@50 \\
\hline
Negative Intersection (Ours)
& \textbf{61.92} & \textbf{62.67}
& \textbf{87.45} & \textbf{96.56}
& \textbf{43.73} & \textbf{67.97} \\

Negative Majority
& 59.13 & 59.54
& 85.92 & 92.81
& 42.96 & 66.54 \\

Negative Union
& 57.26 & 57.91
& 80.14 & 89.36
& 40.82 & 61.29 \\
\hline
\end{tabular}
}
\vspace{-5pt}

\caption{
Comparison of different negative filtering strategies.
Negative intersection consistently outperforms the more aggressive majority
and union strategies across all benchmarks.
}
\label{tab:negative_filtering}
\end{table}
\subsection{Comparison of Negative Filtering Strategies}

We further compare negative intersection with two more aggressive strategies:
negative majority and negative union. Negative majority removes an image if it
appears in more than half of the selected negative retrievals, while negative
union removes it if it appears in any selected negative retrieval.

As shown in Table~\ref{tab:negative_filtering}, negative intersection
consistently performs best, while more aggressive filtering leads to larger
performance degradation. Negative retrievals can be noisy and may mistakenly
retrieve ground-truth candidates; aggressive filtering therefore increases the
risk of removing relevant candidates before verification, after which they
cannot be recovered by the final ranker.

\section{Set Operations}
\label{appendix:set_operation}

We consider three types of set operations: union, intersection, and difference.
These operations enable explicit composition of results returned by multiple retrieval tools, producing a concise candidate set for subsequent ranking.
Specifically, union is used to improve recall by aggregating candidates from different tools, intersection increases precision by retaining only commonly retrieved results, and difference filters out candidates that are likely to be irrelevant.

\begin{figure*}[ht]
  \centering
  \includegraphics[width=0.95\linewidth]{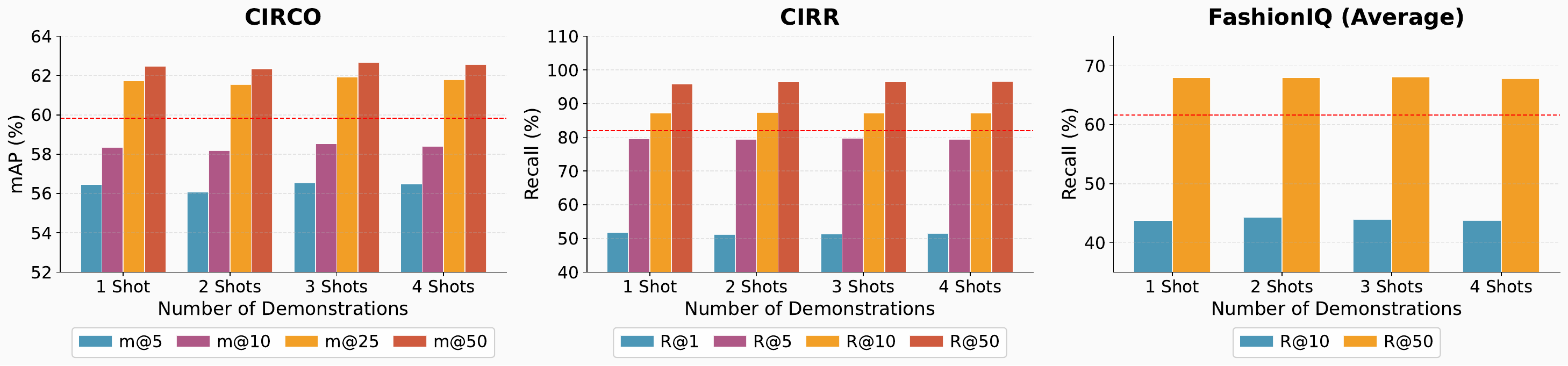}
  \vspace{-3mm}
  \caption{Performance comparison w.r.t. different numbers of demonstrations (\ie, the number of shots) for inference-time steering. ``m" and ``R" denotes mAP and Recall. The red dashed line denotes the zero-shot performance of OSCAR (\ie, mAP@50 on \textit{CIRCO}, Recall@50 on \textit{CIRR} and \textit{FashionIQ}). 
  }
  \label{fig:shot comparison}
\end{figure*}

\section{More Experiments}
\label{appendix:addition_experiment}

\subsection{Case Study}
\label{appendix:case_study}
We present the case study on FashionIQ and CIRR datasets in Figure~\ref{fig:case_study}. 
The left panel shows a successful planning trajectory where structured set operations retrieve the target image and rank it in the first place. 
The right panel highlights the effect of the golden library: based on the in-context golden demonstration, the planner is steered towards more appropriate tool-call and set-operation choices and consequently retrieves the ground-truth image.

A common failure mode we observe is a \textit{polarity mismatch in negative evidence}. The negative branch is intended to retrieve the undesired concept (\eg, ``more than one beetle'' in Figure~\ref{fig:intro}) and remove it via \textsc{Set Difference}. However, the planner may instead retrieve the negated concept (\eg, ``one beetle'') and mistakenly use it as the set to subtract. This reverses the intended set semantics and can remove valid targets.
With our optimization-derived guidance, the planner is steered towards consistent negative evidence and set composition, preventing such erroneous exclusion.

\begin{table}[t]
\centering

\resizebox{0.49\textwidth}{!}{
\renewcommand\arraystretch{1.15}
\begin{tabular}{ccccccccc}
\toprule
\hline
                        & \multicolumn{2}{c}{Gallery1} & \multicolumn{2}{c}{Gallery2} & \multicolumn{2}{c}{Gallery3} & \multicolumn{2}{c}{Avg}   \\ \cline{2-9} 
\multirow{-2}{*}{Model} & N@10         & R@10        & N@10         & R@10        & N@10         & R@10        & N@10        & R@10        \\ \hline

VLM2Vec                 & 44.29        & 47.78       & 44.91        & 48.69       & 38.96        & 41.86       & 42.72       & 46.11       \\
B3\_Qwen2\_7B           & 42.87        & 47.32       & 41.84        & 45.71       & 37.49        & 40.76       & 40.73       & 44.60       \\
Ops-MM-v1               & 51.78        & 53.74       & 48.95        & 51.44       & 43.98        & 46.99       & 48.24       & 50.72       \\
RzenEmbed-v2-7B         & 51.04        & 53.79       & 49.62        & {\ul 51.93} & 46.23        & 47.59       & 48.96       & 51.10       \\
QQMM-embed-v2           & 51.64        & 54.43       & 48.42        & 51.29       & 44.60        & 46.63       & 48.22       & 50.78       \\ \hline

bge-m3                  & 39.61        & 41.98       & 38.36        & 41.47       & 39.87        & 41.41       & 39.28       & 41.62       \\
Qwen3-Embed-0.6B        & 40.73        & 42.78       & 40.11        & 42.54       & 39.60        & 41.11       & 40.15       & 42.14       \\
Qwen3-Embed-4B          & 43.06        & 44.93       & 44.55        & 46.80       & 41.48        & 43.07       & 43.03       & 44.93       \\
Qwen3-Embed-8B          & 42.92        & 45.05       & 44.57        & 46.86       & 41.89        & 43.13       & 43.13       & 45.01       \\ \hline

$X^R$                   & {\ul 52.49}  & {\ul 56.17} & {\ul 50.23}  & 48.72       & {\ul 49.74}  & {\ul 56.91} & {\ul 50.82} & {\ul 53.93} \\ \hline

OSCAR (Ours)            & \textbf{56.01} & \textbf{65.28} & \textbf{53.12} & \textbf{57.43} & \textbf{58.70} & \textbf{63.92} & \textbf{55.94} & \textbf{62.21} \\
\rowcolor{purple!15} Rel.Improve. (\%) 
                         & 6.71\%       & 16.22\%      & 5.75\%       & 10.59\%      & 18.01\%      & 12.32\%      & 10.08\%      & 15.36\%      \\ \hline
\bottomrule
\end{tabular}}
\caption{Performance of real-world text-to-image retrieval on private industrial photo galleries. N@$K$ and R@$K$ denote NDCG@$K$ and Recall@$K$, respectively. Best results are highlighted in bold, while the second best are underlined.}
\label{tab:album}

\end{table}

\subsection{Evaluation on Industrial Photo Galleries}
\label{appendix:industrial_gallery}
To further assess the robustness of our OSCAR under real-world industrial scenarios, we evaluate it on a private in-the-wild dataset consisting of multiple user photo galleries. Each gallery corresponds to an individual user’s photo collection and is associated with text queries for within-gallery image retrieval. 
This setting reflects practical album search scenarios, and the task becomes general text-to-image retrieval, where image distributions are highly diverse and queries often require non-trivial reasoning over user intentions beyond simple visual descriptions. 
Specifically, we test on three galleries composed of 1,069/1,466/1,047 real-world images and 483/336/369 real-user queries.

We report the performance in Table~\ref{tab:album}.
We can observe that OSCAR consistently outperforms strong  baselines across all galleries in terms of both metrics.
On average, OSCAR achieves a relative improvement of 6.71\% in NDCG@10 and 16.22\% in Recall@10 over the best baseline.
These results demonstrate that our optimization-steered tool-call trajectory planning and structured set-theoretic composition remain effective beyond public benchmarks, and generalize well to real-world user photo collections with natural data distributions and complex user intentions.

\subsection{Robustness to the Number of Demonstrations}

We study how the number of retrieved demonstrations from the golden library (\ie, the number of shots) affects inference-time steering of the VLM planner. Specifically, we vary the number of demonstrations in \{1, 2, 3, 4\} and evaluate CIR performance under otherwise identical settings. As shown in Figure~\ref{fig:shot comparison}, OSCAR is largely insensitive to the number of demonstrations. The performance remains stable across all datasets as the shot count increases, with only minor fluctuations. This again suggests that the golden library primarily provides \textit{generalizable planning trajectories}, rather than merely memorizing answers to similar queries. Consequently, adding more demonstrations yields limited additional information beyond a single trajectory.

\subsection{CIRCO Recall}
\label{appendix:circo_recall}

In addition to the widely used mAP metric on CIRCO, we also report Recall@K in Table~\ref{tab:circo_recall}. As shown in the table, OSCAR consistently
achieves higher recall than unified embedding baselines, indicating its ability to retrieve a larger portion of target images.

\begin{table}[t]
\centering

\resizebox{0.48\textwidth}{!}{
\renewcommand\arraystretch{0.9}
\begin{tabular}{c|cccc}
\toprule
\hline
Method               & R@5            & R@10           & R@25           & R@50           \\ \hline
Ops-MM-v1-7B         & 37.88          & 54.25          & 69.75          & 78.00          \\
RzenEmbed-v2-7B      & 60.75          & 72.38          & 85.75          & 92.25          \\
VLM2Vec              & 11.00          & 16.38          & 26.75          & 38.62          \\
B3\_Qwen2\_7B        & 12.00          & 18.00          & 29.00          & 38.75          \\
QQMM-embed-v2        & {\ul 71.12}    & {\ul 79.75}    & {\ul 90.25}    & {\ul 95.26} \\ \hline
bge-m3               & 14.62          & 20.62          & 32.25          & 40.62          \\
Qwen3-Embed-0.6B     & 17.12          & 25.00          & 38.00          & 50.62          \\
Qwen3-Embed-4B       & 25.62          & 37.38          & 51.25          & 62.62          \\
Qwen3-Embed-8B       & 27.25          & 38.75          & 53.12          & 65.62          \\
\hline
\textbf{OSCAR (Ours)} & \textbf{75.50} & \textbf{85.50} &  \textbf{94.62} & \textbf{97.50}   \\ \hline
\rowcolor{purple!15}
Rel.Impr (\%)        & 6.16           & 6.72           & 4.84           & 2.35          \\ \hline
   \bottomrule
\end{tabular}}
\vspace{-5pt}
\caption{Recall comparison on \textit{CIRCO} dataset. Best results are highlighted in bold, while the second best are underlined. \textit{Rel.Impr} denotes the relative improvement over the best baseline.}
\label{tab:circo_recall}
\end{table}

\subsection{Efficiency Analysis}

\begin{table}[t]
\centering

\resizebox{0.49\textwidth}{!}{

\begin{tabular}{c|cccc}
\toprule
\hline
Dataset & \makecell{Atomic Retrieval \\ Construction} & \makecell{Recall-Oriented \\ Selection MIP} & \makecell{Precision-Oriented \\ Composition MIP} & Total \\ \hline
CIRCO & 12.03s & 0.58s & 2.62s & 15.23s \\
CIRR & 13.45s & 0.21s & 0.28s & 13.94s \\
FashionIQ & 19.44s & 0.45s & 2.00s & 21.89s \\ \hline
\bottomrule
\end{tabular}

}
\vspace{-5pt}

\caption{Offline time cost analysis.}
\label{tab:offline efficiency}
\end{table}

\begin{table}[t]
\centering

\resizebox{0.49\textwidth}{!}{
\begin{tabular}{c|ccccc}
\toprule
\hline
Dataset & \makecell{Case \\ Retrieval} & Plan & \makecell{Candidate \\ Retrieval} & Verification & Total \\ \hline
CIRCO & 0.96s & 7.93s & 2.10s & 12.79s & 23.78s \\
CIRR & 1.05s & 11.53s & 0.72s & 10.84s & 24.14s \\
FashionIQ & 1.08s & 12.99s & 1.37s & 5.03s & 20.47s \\ \hline
\bottomrule
\end{tabular}}
\vspace{-5pt}

\caption{Online testing time cost analysis.}
\label{tab:online efficiency}

\end{table}
\begin{table}[t]
\centering

\resizebox{0.45\textwidth}{!}{
\begin{tabular}{c|cc}
\hline
Method & CIRCO (s) & CIRR (s)\\
\hline
AutoCIR & 33.9 & 29.4\\
XR & 36.5 & 43.0\\
\textbf{OSCAR (Ours)} & \textbf{23.8} & \textbf{24.1}\\
\hline
\end{tabular}
}
\vspace{-5pt}

\caption{
Inference efficiency comparison with agentic baselines.
Under the same planner, verifier, and retrieval settings, OSCAR achieves the lowest latency among all methods, outperforming AutoCIR and XR in both CIRCO and CIRR benchmarks.
}
\label{appendix:agentic_efficiency}
\end{table}

We provide a breakdown of the time cost for both the offline and online stages of OSCAR, averaged per sample. The experiments are conducted on 4 A100 GPUs. The planner and verifier are Qwen3-VL-32B-Instruct deployed with vLLM, and the atomic retrieval tools (bge, Qwen3-Emb-4B/8B, QQMM, Ops, and RzenEmbed) are executed in parallel.

\noindent \textbf{Offline Stage.} Table~\ref{tab:offline efficiency} shows that the dominant cost comes from atomic retrieval construction, while the two MIP stages are relatively lightweight. This supports our claim that the two-stage offline design avoids combinatorial explosion in practice.

\noindent \textbf{Online Stage.} Table~\ref{tab:online efficiency} shows that the main cost comes from the planning and verification stages, while case retrieval and candidate retrieval contribute only a small portion of the total latency. This indicates that OSCAR’s online overhead is dominated by VLM reasoning, rather than by the retrieval or library lookup itself.

\noindent B\textbf{asline Comparison.} We further compare OSCAR with iterative agentic baselines under a fair setting using the same planner, verifier, and QQMM retriever. As shown in Table ~\ref{appendix:agentic_efficiency}, on CIRCO, OSCAR reduces the per-sample latency from 33.9s (AutoCIR) and 36.5s (XR) to 23.8s; on CIRR, it reduces the latency from 29.4s  (AutoCIR) and 43.0s (XR) to 24.1s. Generally, OSCAR remains more efficient than iterative agentic pipelines, as it replaces multi-round trial-and-error interaction with a optimization-steered compositional retrieval plan.

\paragraph{Scalability Analysis.}
We further evaluate the scalability of the offline optimization by enlarging
two key dimensions of the atomic retrieval space: the number of retrieval
tools and the granularity of top-$k$ discretization. Starting from the default
setting of 6 tools and 10 top-$k$ cutoffs, we increase the tool pool to 10
and the number of cutoffs to 20. As MIP is NP-hard, the solving cost naturally
increases as the atomic retrieval space grows. As shown in
Table~\ref{tab:mip_scalability}, our two-stage formulation mitigates this
growth through a prune-and-refine strategy, where Stage I first reduces the
search space and Stage II performs composition only over the reduced candidate
space.

\begin{table*}[t]
\centering

\resizebox{\textwidth}{!}{
\renewcommand{\arraystretch}{1.1}
\begin{tabular}{llccccc}
\hline
Setting
& \# Retrieval Tools
& Top-$k$ Settings
& Stage I MIP
& Stage II MIP
& Total MIP Time
& Increase vs. Default \\
\hline
Default
& 6 tools: 3 visual + 3 caption/text
& 10
& 0.58s
& 2.62s
& 3.20s
& -- \\

More tools
& 8 tools: 4 visual + 4 caption/text
& 10
& 0.82s
& 3.69s
& 4.51s
& +40.9\% \\

Max tools
& 10 tools: 5 visual + 5 caption/text
& 10
& 1.07s
& 4.85s
& 5.92s
& +85.0\% \\

Finer top-$k$
& 6 tools: 3 visual + 3 caption/text
& 20
& 1.33s
& 6.02s
& 7.35s
& +129.7\% \\

Max tools + finer top-$k$
& 10 tools: 5 visual + 5 caption/text
& 20
& 2.46s
& 11.10s
& 13.56s
& +323.8\% \\
\hline
\end{tabular}
}
\vspace{-5pt}

\caption{
Scalability analysis of the offline MIP optimization with different numbers of
retrieval tools and top-$k$ settings.
}
\label{tab:mip_scalability}
\end{table*}

\subsection{OOD Generalization}


\begin{table}[t]
\centering

\vspace{-3pt}
\resizebox{0.46\textwidth}{!}{
\renewcommand{\arraystretch}{1.1}
\begin{tabular}{ccc}
\hline
\makecell{Golden Library Set} & Testing Set & R@10 ($\Delta$) \\ 
\hline
CIRR      & CIRCO     & 85.25 (+5.50) \\
CIRCO     & CIRCO     & \textbf{85.50} (+5.75) \\
CIRCO     & FIQ-Dress & 38.17 (+0.79) \\
FIQ-Dress & FIQ-Dress & \textbf{38.47} (+1.09) \\
\hline
\end{tabular}
}
\vspace{-5pt}

\caption{
OOD generalization of the golden library.
We construct the golden library on one dataset and evaluate it on another.
Numbers in parentheses indicate the absolute improvement over the best baseline on the corresponding testing set.
}
\label{tab:OOD}
\end{table}

We evaluate cross-dataset generalization of the Golden Library. Specifically, we build the library on one dataset and test on another. We show the results in Table~\ref{tab:OOD}, which demonstrate that the performance gap between cross-dataset and in-domain settings is small, suggesting that the Golden Library captures \textbf{transferable planning patterns} rather than merely memorizing dataset-specific cues. It's noteworthy that the Golden Library only uses 10\% of the training data, further demonstrating the generalization ability.

\subsection{Tool Selection \& Pool Size Study}

\begin{table}[t]
\centering

\resizebox{0.49\textwidth}{!}{

\begin{tabular}{cc|cccc}
\toprule
\hline
Multimodal & Text & m@5 & m@10 & m@25 & m@50 \\ \hline
\makecell{QQMM, RzenEmb, \\ Ops-MM} & \makecell{Bge, Qwen3-4B,\\ Qwen3-8B} & \textbf{56.54} & 58.53 & \textbf{61.92} & \textbf{62.67} \\

\hline

\begin{tabular}[c]{@{}c@{}} QQMM, VLM2Vec,\\ B3\_Qwen2\_7B\end{tabular} & \makecell{Bge, Qwen3-4B, \\Qwen3-8B} & 55.98 & 57.87 & 59.85 & 60.54 \\

\hline

\makecell{QQMM,\\ RzenEmb, Ops-MM} & \makecell{Qwen3-0.6B,\\ Qwen3-4B, Qwen3-8B} & 56.47 & \textbf{58.62} & 61.63 & 62.59 \\ \hline

\bottomrule
\end{tabular}}
\vspace{-5pt}

\caption{Performance w.r.t. different tools on CIRCO.}
\label{tab:tool_selection}
\end{table}

\noindent \textbf{Tool Selection.} We provide an ablation study in Table~\ref{tab:tool_selection}. In all settings, we use a fixed pool of three multimodal retrieval tools and three text retrieval tools.
\begin{itemize}[leftmargin=10pt]
    \item \textbf{Row 1 (original setting).} We use our default tool pool: multimodal tools QQMM, RzenEmb, and Ops-MM, together with text tools BGE, Qwen3-Emb-4B, and Qwen3-Emb-8B.
    \item \textbf{Row 2 (multimodal replacement).} We replace the multimodal tool pool with a \textbf{weaker} set: QQMM, VLM2Vec, and B3-Qwen2-7B.
    \item \textbf{Row 3 (text-tool replacement).} We replace the text tool pool with a more \textbf{homogeneous Qwen3-only} set: Qwen3-Emb-0.6B, Qwen3-Emb-4B, and Qwen3-Emb-8B.    
\end{itemize}

The results suggest two main observations:
\begin{itemize}[leftmargin=10pt]
    \item \textbf{Tool capacity matters.} Stronger individual tools generally lead to better overall retrieval performance, as Row 1 consistently outperforms Row 2. 
    \item \textbf{Tool diversity also matters.} A heterogeneous tool pool is more effective for handling complex and diverse query intents, as Row 1 outperforms Row 3, even though BGE itself is weaker than Qwen3-Emb-0.6B.
\end{itemize}

\noindent \textbf{Tool Pool Size.} We further study the effect of tool pool size on CIRCO. Starting from a minimal pool of QQMM + Qwen3-8B, we progressively add one multimodal tool and one text tool at each step. We observe that m@25 improves from 58.82 → 59.73 → 61.92 → 61.98 as the pool size increases from 2 to 8. This shows that enlarging the tool pool is generally beneficial, as it provides more complementary capabilities for diverse queries. At the same time, the gain becomes marginal once the pool is sufficiently rich.

\subsection{Controlled Experiment}

\begin{table}[t]
\centering

\vspace{-3pt}

\resizebox{0.49\textwidth}{!}{
\renewcommand\arraystretch{1.1}

\begin{tabular}{c|cccc}
\toprule
\hline
Method & m@5 & m@10 & m@25 & m@50 \\ \hline
AutoCIR & 50.57 & 54.17 & 57.12 & 57.54 \\
$X^R$ & 52.39 & 55.29 & 57.75 & 58.83 \\
OSCAR (ours) & \textbf{56.54} & \textbf{58.53} & \textbf{61.92} & \textbf{62.67} \\ \hline
\bottomrule
\end{tabular}}
\vspace{-5pt}

\caption{Agentic methods w/ QQMM and Qwen3-VL-32B on CIRCO.}
\label{tab:controlled_exp}
\end{table}

We provide a controlled comparison where all agentic methods are equipped with the same VLM and retrieval tool. We note that, in CIR evaluation, it is common practice to report the original results of existing baselines from their papers~\cite{autocir, xr, circo}, since these methods are often designed and tuned with their own retrieval backbones and VLMs. Thus, for the main results in Table~\ref{tab:circo_cirr_main_table} and Table~\ref{tab:fashioniq_main_table}, we follow previous works and report the original number. However, to further rule out the possibility that OSCAR's gains come from using a stronger retrieval model or VLM, we conduct an additional controlled experiment. In our OSCAR framework, multiple retrieval models are treated as atomic tools. For a fair comparison, we provide the agentic baselines with the strongest single embedding model among OSCAR's atomic tools, and also control the VLM to be the same across all methods. As shown in Table~\ref{tab:controlled_exp}, OSCAR still consistently outperforms these agentic baselines under this controlled setting.

We would like to highlight that OSCAR's advantage is not driven by simply using a strong model such as QQMM. As illustrated in Figure~1, a single retrieval model suffers from \textbf{single-model myopia}: it may perform well for certain query types but fail to capture other complementary retrieval signals. OSCAR addresses this limitation by systematically \textbf{unifying} diverse retrieval tools through optimization-steered planning. Instead of relying on one strong embedding space, OSCAR decomposes the query into atomic retrieval intents, selects and composes the outputs of different retrieval tools. Therefore, the improvement comes from tool coordination and structured planning, rather than from merely replacing the backbone with a stronger model.

\subsection{Comparison with DNF-based Composition}
\label{appendix:rationale_dnf}

The rationale for our two-stage prune-and-refine optimization is discussed in
Appendix~\ref{appendix:method_discussion}.
We further investigate a more expressive Disjunctive Normal Form (DNF)-based
composition, whose detailed formulation is provided in
Appendix~\ref{appendix:stage2_f1}.
To validate this design choice, we directly compare our structured composition
with the DNF-based formulation in Table~\ref{tab:dnf_comparison}.

\begin{table}[t]
\centering

\resizebox{0.45\textwidth}{!}{
\begin{tabular}{cccc}
\hline
Dataset & Metric & DNF & OSCAR \\
\hline
CIRCO & m@5  & 49.47 & \textbf{56.54} \\
      & m@10 & 49.30 & \textbf{58.53} \\
      & m@25 & 50.68 & \textbf{61.92} \\
      & m@50 & 50.90 & \textbf{62.67} \\
\hline
CIRR  & R@1  & 41.18 & \textbf{51.18} \\
      & R@5  & 53.23 & \textbf{79.50} \\
      & R@10 & 63.76 & \textbf{87.45} \\
      & R@50 & 86.60 & \textbf{96.56} \\
\hline
FashionIQ & R@10 & 44.18 & \textbf{47.73} \\
          & R@50 & 53.43 & \textbf{67.97} \\
\hline
\end{tabular}
}
\vspace{-5pt}
\caption{
Comparison between DNF-based composition and OSCAR's structured composition.
}
\label{tab:dnf_comparison}
\end{table}
In practice, DNF is more sensitive to imperfect retrieval tools.
Each DNF clause relies on intersections of multiple retrieval results, where
negative retrievals are incorporated through their complements. Consequently,
a ground-truth candidate can be eliminated by either a missed positive
retrieval or an erroneous negative retrieval. Once the ground truth is not
preserved by any clause, it is removed before the final verification stage and
cannot be recovered by the ranker. In contrast, our structured composition
uses a union for positive evidence and a conservative intersection for negative
evidence, making candidate retention more robust to individual retrieval
errors.

Although DNF is theoretically more expressive, the results show that this
additional expressiveness does not translate into better retrieval performance
with noisy retrieval tools. For example, for the query
\textit{``find a family photo without my dad''}, a compact composition may be
written as
\[
\texttt{ToolA(family photo)} -
\texttt{ToolB(my dad)}.
\]
In contrast, DNF may introduce multiple conjunctive clauses, e.g.,
\[
\begin{aligned}
&\{\texttt{ToolA(family)}-\texttt{ToolB(dad)}\} \\
{}\cup{}&
\{\texttt{ToolA(me and my mom)}-\texttt{ToolC(dad)}\} \\
{}\cup{}&
\{\texttt{ToolD(grandparents)}-\texttt{ToolA(children)}\}.
\end{aligned}
\]
Within each clause, an error from any intersected positive or negative
retrieval can eliminate a relevant candidate. If the candidate is not retained
by another clause, it never reaches the final verifier for reranking.

\section{Term Clarification}

\subsection{On \textit{"training free"}}
We use  \textbf{"training free"} to denote having no parameter updates in this paper, as OSCAR utilizes training data solely to generate In-Context Learning (ICL) demonstrations. In the agentic literature, this can be viewed as a few-shot rather than zero-shot approach, but it remains completely parameter-frozen.

\subsection{On "\textbf{atomic tool}"}

Agentic CIR methods may involve diverse subtasks beyond directly applying a retrieval model to the query. In OSCAR, these functionalities are naturally integrated into the retrieval process, \eg, the planner may rewrite the query when generating tool calls, and each retrieval tool may internally include auxiliary procedures such as caption generation, depending on its design. OSCAR then abstracts the outputs of these heterogeneous tools as atomic retrieval sets for subsequent selection and composition.






\clearpage
\onecolumn

\section{Implementation Details of two-stage MIPs}
\label{appendix:mip_details}
The common notations are summarized in Table~\ref{tab:notation}.

\subsection{Recall-Oriented Atomic Retrieval Selection}

We introduce binary decision variables $x_r$ to indicate if a positive atomic retrieval
$r \in \mathcal{R}^+$ is selected.
For image coverage, we use binary indicators $y_i$ for $i \in \mathcal{I}^+$ and
$z_i$ for $i \in \mathcal{I}^-$ to denote whether a ground-truth image or a non-ground-truth image,
respectively, is retrieved by at least one selected atomic retrieval.

Moreover, we group atomic retrievals into different \textit{families} $\mathcal{F}$. A family $F \in \mathcal{F}$ consists of retrievals that share the same tool $f$, query $\hat{q}$, and polarity $p$, differing only in their truncation threshold $k$.
We also introduce binary variables $t_f$ to indicate whether any selected atomic
retrieval is associated with tool $f$.

Based on the incidence definitions in Table~\ref{tab:notation},
we define the following upper-bound coefficients:
\[
\hat{a}_i := \sum_{r \in \mathcal{R}^+} a_{ir},
\qquad
\hat{b}_i := \sum_{r \in \mathcal{R}^+} b_{ir},
\]
which respectively count the number of positive atomic retrievals
that retrieve image $i$ as a target or non-target instance.

The recall-oriented atomic retrieval selection is formulated in the following MIP: 
\begin{align} \max_{x,y,z,t}\quad & w_R \frac{1}{|\mathcal{I}^+|}\sum_{i\in\mathcal{I}^+} y_i - w_F \frac{1}{|\mathcal{I}^-|}\sum_{i\in\mathcal{I}^-} z_i + \lambda_{\mathrm{div}}\sum_f t_f \tag{1a}\label{eq:stage1_obj}\\ \text{s.t.}\quad & y_i \le \sum_{r\in\mathcal{R}^+} a_{ir} x_r \le \hat{a}_i\, y_i, \qquad \forall i\in\mathcal{I}^+, \tag{1b}\label{eq:stage1_gt}\\ & z_i \le \sum_{r\in\mathcal{R}^+} b_{ir} x_r \le \hat{b}_i\, z_i, \qquad \forall i\in\mathcal{I}^-, \tag{1c}\label{eq:stage1_ngt}\\ & \sum_{r\in F} x_r \le 1, \qquad \forall F\in\mathcal{F}, \tag{1d}\label{eq:stage1_family}\\ & t_f \le \sum_{r\in\mathcal{T}_f} x_r \le |\mathcal{T}_f|\, t_f, \qquad \forall f, \tag{1e}\label{eq:stage1_type}\\ & x_r,\, y_i,\, z_i,\, t_f \in \{0,1\}. \tag{1f}\label{eq:stage1_bin} \end{align}

The objective in~\eqref{eq:stage1_obj} prioritizes recall to retain sufficient
candidates for subsequent composition, with diversity serving as a tie-breaker.

Constraints~\eqref{eq:stage1_gt}--\eqref{eq:stage1_ngt} ensure that
an image is treated as retrieved if it is returned by at least one selected positive atomic retrieval.

Constraint~\eqref{eq:stage1_family} therefore selects at most one variant per family,
avoiding redundant inclusion of nested result sets.

Constraint~\eqref{eq:stage1_type} introduces a mild tool-type diversity bias,
encouraging solutions that draw from multiple retrieval tools.

The solution to this MIP yields the optimal set of positive atomic retrievals, denoted as $\mathcal{R}^{+}_{*}$, which produces the recall-oriented candidate set of images $\mathcal{U} = \bigcup_{r \in \mathcal{R}^{+}_{*}} \mathcal{S}_r$.

\begin{table}[t]
\centering

\setlength{\tabcolsep}{6pt}
\renewcommand{\arraystretch}{1.1}
\begin{tabular}{p{0.2\linewidth} p{0.70\linewidth}}
\toprule
\textbf{Symbol} & \textbf{Description} \\
\midrule

$\mathcal{I}$
& An image gallery \\

$\mathcal{R}^+$ 
& Set of positive atomic retrievals\\

$\mathcal{R}^-$ 
& Set of negative atomic retrievals. \\

$r \in \mathcal{R}^+ \cup \mathcal{R}^-$
& An atomic retrieval. \\

$\mathcal{S}_r$
& Set of candidate images returned by atomic retrieval call $r$, where $\mathcal{S}_r \subseteq \mathcal{I}$. \\

$\mathcal{I}^+$
& Set of ground-truth images, $\mathcal{I}^+ \subseteq \mathcal{I}$. \\

$\mathcal{I}^-$
& Set of non-ground-truth images,
$\mathcal{I}^- = \mathcal{I}\setminus \mathcal{I}^+$. \\

$i \in \mathcal{I}$
& A candidate image. \\

$a_{ir}$ 
& $a_{ir} = \mathbf{1}[\, i \in \mathcal{I}^+ \cap \mathcal{S}_r \,], \; r \in \mathcal{R}^+. $ \\

$b_{ir}$ 
& $b_{ir} = \mathbf{1}[\, i \in \mathcal{I}^- \cap \mathcal{S}_r \,], \; r \in \mathcal{R}^+. $ \\

$\mathcal{R}^{+}_{*}$ & Set of positive atomic retrievals from Stage I.\\
$\mathcal{U}$
& Positive candidate universe induced by the selected positive retrievals from Stage I. \\

$c_{ir}$
& $c_{ir} = \mathbf{1}[\, i \in \mathcal{U} \cap \mathcal{S}_r \,], \; r \in \mathcal{R}^+_*. $ \\

$d_{ir}$
& $d_{ir} = \mathbf{1}[\, i \in \mathcal{U} \cap \mathcal{S}_r \,], \; r \in \mathcal{R}^-. $ \\

$\mathcal{R}^{+}_{**}$ & Set of positive atomic retrievals from Stage II.\\
$\mathcal{R}^{-}_{*}$ & Set of negative atomic retrievals from Stage II.\\
\bottomrule
\end{tabular}
\caption{Notation for the Problem Formulation}
\label{tab:notation}
\end{table}

\subsection{Precision-Oriented Logic Composition}

We use binary variables $x_r$ and $w_r$ to indicate if a positive or negative
atomic retrieval $r \in \mathcal{R}^+_*$ or $r \in \mathcal{R}^-$ is selected, respectively.
For each image $i \in \mathcal{U}$, the binary variable $e_i$ indicates whether $i$
is included in the final composed result.
We further introduce an auxiliary binary variable $g_i$ to indicate whether image $i$
lies in the intersection of the selected negative atomic retrieval result sets.

The precision-oriented logic composition is formulated in the following MIP: \begin{align}
\max_{x,w,r,g}\quad &
\sum_{i \in \mathcal{U} \cap \mathcal{I}^+} e_i
\tag{2a}\label{eq:stage2_obj_1}
\\
\text{s.t.}\quad
& e_i \le \sum_{r \in \mathcal{R}_*^+} c_{ir} \, x_r,
\qquad \forall i \in \mathcal{U}
\tag{2b}\label{eq:stage2_pos}\\
& e_i \le 1 - g_i,
\qquad \forall i \in \mathcal{U}
\tag{2c}\label{eq:stage2_diff}\\
& g_i \le d_{ir} + (1 - w_r),
\qquad \forall i \in \mathcal{U},\ \forall r \in \mathcal{R}^-
\tag{2d}\label{eq:stage2_int_ub}\\
& g_i \ge 1 - \sum_{r \in \mathcal{R}^-} (1 - d_{ir})\, w_r,
\qquad \forall i \in \mathcal{U}
\tag{2e}\label{eq:stage2_int_lb}\\
& \sum_{r \in \mathcal{R}^+_*} x_r \ge 1
\tag{2f}\label{eq:stage2_nonempty}\\
& x_r,\, w_r,\, e_i,\, g_i \in \{0,1\}.
\tag{2g}\label{eq:stage2_bin}
\end{align}

The objective~\eqref{eq:stage2_obj_1} is precision-oriented by design and
encourages retaining as many target images as possible after filtering.

Constraint~\eqref{eq:stage2_pos} allows an item to be included in the final result
if it is retrieved by at least one selected positive atomic retrieval.

Constraint~\eqref{eq:stage2_diff} implements exclusion by preventing items that
satisfy the negative clause from being returned.

Constraints~\eqref{eq:stage2_int_ub}--\eqref{eq:stage2_int_lb} ensure that
$g_u = 1$ if and only if image $u$ is retrieved by all selected negative atomic retrievals.

Constraint~\eqref{eq:stage2_nonempty} prevents degenerate solutions by enforcing that
at least one positive tool is selected, ensuring a non-empty positive clause
in the two-clause composition.

The solution yields an optimal plan $(\mathcal{R}^{+}_{**}, \mathcal{R}^{-}_{*})$ consisting of the selected positive and negative atomic retrievals, which can be used to construct the golden library.

\section{Alternative Stage II: F1-Optimized DNF Composition}
\label{appendix:stage2_f1}

Stage II adopts a fixed set-composition structure for stability.
Here, we consider a more flexible alternative that represents logical composition in
disjunctive normal form (DNF) and selects clauses via an MIP to directly optimize an F1-style objective.

To handle negative atomic retrievals, we convert set difference into intersection via relative complements.
Specifically, for any two sets $A$ and $B$ with $A,B \subseteq \mathcal{U}$, we apply the identity
\begin{equation}
A \setminus B
\;=\;
A \cap (\mathcal{U} \setminus B)
\;=\;
A \cap \overline{B},
\label{eq:diff_to_intersect}
\end{equation}
where $\mathcal{U}$ denotes the positive candidate universe induced by Stage I,
and $\overline{B}$ denotes the relative complement of $B$ with respect to $\mathcal{U}$,
\ie, $\overline{B} := \mathcal{U}\setminus B$.

Accordingly, each clause $C_c$ is defined as
\begin{equation}
C_c
=
\bigcap_{r \in P_c} \tilde{\mathcal{S}}_r,
\qquad
\tilde{\mathcal{S}}_r =
\begin{cases}
\mathcal{S}_r, & r \in \mathcal{R}^+_*,\\
\mathcal{U} \setminus \mathcal{S}_r, & r \in \mathcal{R}^-,
\end{cases}
\label{eq:clause_def}
\end{equation}
where $P_c \subseteq \mathcal{R}^+_* \cup \mathcal{R}^-$ denotes the set of atomic retrievals included in clause $c$.

We enumerate clauses prior to optimization by imposing constraints on the maximum
clause length and the number of negative atomic retrievals per clause, yielding a
finite candidate set $\{C_c\}_{c=1}^C$.
We introduce a binary indicator $u_c$ to denote whether clause $c$ is selected, and
define the final retrieval result as the union of selected clauses:
\begin{equation}
\mathcal{O}
=
\bigcup_{c:u_c=1} C_c .
\label{eq:dnf_union}
\end{equation}

We define clause-level incidence matrices
\[
\alpha_{ic} = \mathbbm{1}[\, i \in \mathcal{I}^+ \cap C_c \,],
\qquad
\beta_{ic} = \mathbbm{1}[\, i \in \mathcal{I}^- \cap C_c \,],
\]
together with the corresponding upper-bound coefficients
\[
\hat{\alpha}_i := \sum_c \alpha_{ic},
\qquad
\hat{\beta}_i := \sum_c \beta_{ic}.
\]
Binary indicators $y_i$ and $z_i$ are used to indicate whether image $i$ is included
in the final result $\mathcal{O}$.
We further impose a budget constraint that limits the number of selected clauses to at most $M$.

We formulate logic composition in the following MIP:
\begin{align}
\max_{u,y,z}\quad
& 2 \sum_{i \in \mathcal{I}^+} y_i
- \lambda \Bigl(|\mathcal{I}^+|
+ \sum_{i \in \mathcal{I}^+} y_i
+ \sum_{i \in \mathcal{I}^-} z_i \Bigr)
- \alpha \sum_c |P_c|\, u_c
\tag{3a}\label{eq:stage2_obj}\\
\text{s.t.}\quad
& y_i \le \sum_c \alpha_{ic} u_c \le \hat{\alpha}_i\, y_i,
\qquad \forall i \in \mathcal{I}^+,
\tag{3b}\label{eq:stage2_gt}\\
& z_i \le \sum_c \beta_{ic} u_c \le \hat{\beta}_i\, z_i,
\qquad \forall i \in \mathcal{I}^-,
\tag{3c}\label{eq:stage2_ngt}\\
& \sum_c u_c \le M,
\tag{3d}\label{eq:stage2_budget}\\
& u_c,\, y_i,\, z_i \in \{0,1\}.
\tag{3e}\label{eq:stage2_bin}
\end{align}

The F1 score corresponding to the selected clause union
$\mathcal{O} = \bigcup_{c:u_c=1} C_c$
is given by
\begin{equation}
F_1(\mathcal{O})
=
\frac{2\sum_{i \in \mathcal{I}^+} y_i}
{|\mathcal{I}^+| + \sum_{i \in \mathcal{I}^+} y_i + \sum_{i \in \mathcal{I}^-} z_i},
\label{eq:stage2_f1}
\end{equation}
and the parameter $\lambda$ in~\eqref{eq:stage2_obj}
is updated via Dinkelbach iteration as
\begin{equation}
\lambda
\leftarrow
\frac{2\sum_{i \in \mathcal{I}^+} y_i}
{|\mathcal{I}^+| + \sum_{i \in \mathcal{I}^+} y_i + \sum_{i \in \mathcal{I}^-} z_i}.
\label{eq:stage2_update}
\end{equation}

The objective in~\eqref{eq:stage2_obj} is obtained via the Dinkelbach reformulation
of the fractional F1 objective in~\eqref{eq:stage2_f1}.
At each iteration, an MIP is solved to maximize the difference between the numerator
and the denominator scaled by the current value of $\lambda$, and $\lambda$ is
updated to the resulting F1 score until convergence.
The complexity penalty $\alpha \sum_c |P_c|\, u_c$ discourages selecting long clauses.
Constraints~\eqref{eq:stage2_gt}--\eqref{eq:stage2_ngt} link clause selection to
image-level inclusion through $y_i$ and $z_i$, and
Constraint~\eqref{eq:stage2_budget} limits the number of selected clauses.

In practice, allowing unions over many enumerated clauses often produces bloated
and unintuitive DNF expressions that are difficult to interpret or learn.
For example, the resulting composition may take the form
\[
(A \setminus B) \;\cup\; (A \cap C) \;\cup\; D,
\]
where the underlying retrieval logic is fragmented across multiple overlapping
clauses.
To facilitate effective learning of tool-call trajectories by the planner,
we therefore adopt the less flexible but more structured Stage~II formulation
in Section~\ref{sec:method:stage2} in the main method.

\section{Expressive Completeness Proof}

The predefined two-clause composition (Appendix~\ref{appendix:mip_details}) in OSCAR is a practical choice for \textbf{stability and MIP tractability}, rather than a fundamental limit of expressiveness. More generally, atomic retrieval outputs generate a Boolean set algebra, and \textbf{any retrieval target} composed from them via union, intersection, and difference can be rewritten in disjunctive normal form (DNF). Hence, when extended to more complex queries, OSCAR can be naturally generalized from the current two-clause composition to a richer multi-clause DNF program under the same optimization-steered framework. We provide proof of DNF Completeness of Atomic Retrieval Composition below.

\begin{theorem}[DNF Completeness of Atomic Retrieval Composition]

Let $U$ denote the candidate universe, and let $\{S_r\}_{r\in\mathcal R}$ be atomic retrieval sets with $\cup,\cap,\setminus$. For any $E \in \mathfrak A$, there exists
\begin{equation}
    E = \bigcup_{j=1}^{m}\ \bigcap_{r\in C_j} L_r,
\quad
L_r \in \{S_r,\ U\setminus S_r\}.
\end{equation}

\end{theorem}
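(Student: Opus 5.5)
We read $\mathfrak A$ as the smallest family of subsets of $U$ that contains every atomic set $S_r$ (taken inside $U$, i.e.\ $S_r\subseteq U$) and is closed under $\cup$, $\cap$ and $\setminus$. The plan is a structural induction on how an element of $\mathfrak A$ is built. We first fix a convenient normal form and then show it is preserved by each operation.

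To keep negations explicit we use the literals $L_r\in\{S_r,\,U\setminus S_r\}$. For a clause $C\subseteq\mathcal R$ with a chosen polarity for each $r\in C$, we write $T_C:=\bigcap_{r\in C}L_r$. A set is in \emph{DNF} if it is a finite union $\bigcup_j T_{C_j}$. Two conventions handle degenerate cases. The empty union is allowed and represents $\emptyset$; if one prefers $m\ge 1$, use the clause $S_r\cap(U\setminus S_r)=\emptyset$. The empty intersection is read as $U$, which is needed if $U$ itself must be expressible.

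The induction has four steps.

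\emph{Base case.} Each $S_r$ is a DNF with a single one-literal clause.

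\emph{Union.} If $E=\bigcup_j T_{C_j}$ and $E'=\bigcup_k T_{C'_k}$, then $E\cup E'$ is already a DNF, obtained by concatenating the two lists of clauses.

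\emph{Intersection.} We distribute:
\[
E\cap E'=\bigcup_{j,k}\bigl(T_{C_j}\cap T_{C'_k}\bigr).
\]
Each $T_{C_j}\cap T_{C'_k}$ is again an intersection of literals. If some $r$ appears with both polarities, that clause is empty and can be dropped. If $r$ appears twice with the same polarity, the repetition is removed by idempotence.

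\emph{Difference.} This is the only step needing real work. We write
\[
E\setminus E'=E\cap\bigl(U\setminus E'\bigr),
\]
which is valid because $E\subseteq U$; this is the identity \eqref{eq:diff_to_intersect} already used in the paper. We then apply De Morgan twice:
\[
U\setminus\bigcup_k T_{C'_k}=\bigcap_k\bigl(U\setminus T_{C'_k}\bigr)
\quad\text{and}\quad
U\setminus T_{C'_k}=\bigcup_{r\in C'_k}\overline{L_r}.
\]
Here the complement of a literal is again a literal, since $U\setminus(U\setminus S_r)=S_r$ because $S_r\subseteq U$. This shows $U\setminus E'$ is an intersection of unions of literals, i.e.\ a CNF. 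Distributing the intersection over the unions turns it into a DNF, with one clause per choice of a single literal from each $U\setminus T_{C'_k}$. Finally, the intersection case above combines it with $E$.

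The main obstacle is this complementation step. One must check that the complement stays within the literal family, which requires $S_r\subseteq U$ and relative complements throughout. One must also be careful with the degenerate cases: empty clauses, and $E'=\emptyset$ or $E'=U$, where the conventions above are needed.

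As a cleaner alternative, or as a cross-check, we can use the atom argument. For a finite $\mathcal R$, every $u\in U$ determines a full-length clause $A_u=\bigcap_{r\in\mathcal R}L_r(u)$, with $L_r(u)=S_r$ if $u\in S_r$ and $U\setminus S_r$ otherwise. These atoms partition $U$. A simple induction shows that every $E\in\mathfrak A$ is a union of atoms, since membership in $E$ depends only on the vector $(\mathbf 1[u\in S_r])_r$. Hence $E=\bigcup_{u\in E}A_u$, a finite union, which is exactly the claimed form.
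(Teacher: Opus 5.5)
Your proof is correct, but it takes a different route from the paper's. The paper's proof works in propositional logic:
\begin{itemize}
\item each $u\in U$ gets the indicator vector $X_r(u)=\mathbf 1[u\in S_r]$;
\item $E$ is read as a Boolean formula via $\cup\leftrightarrow\lor$, $\cap\leftrightarrow\land$, $A\setminus B\leftrightarrow A\land\neg B$;
\item the DNF theorem is invoked as a black box;
\item literals are mapped back to $S_r$ and $U\setminus S_r$.
\end{itemize}

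You never leave the set algebra. You show that the DNF-expressible subsets of $U$ contain every $S_r$ and are closed under $\cup,\cap,\setminus$. Your main argument is therefore the De Morgan-plus-distributivity proof of the DNF theorem, carried out directly on sets. Your atom argument is the truth-table proof of the same theorem. It is the version closest to the paper, since your atoms $A_u$ are exactly the full minterms in the paper's variables $X_r(u)$.

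The paper's route buys brevity. Yours makes explicit several points the paper leaves implicit:
\begin{itemize}
\item that $E$ really is computed pointwise by some formula, which is itself a structural induction;
\item that the back-translation $\neg X_r\mapsto U\setminus S_r$ gives a genuine set equality only when $S_r\subseteq U$;
\item that contradictory or repeated literals must be removed to reach the indexed form $\bigcap_{r\in C_j}L_r$;
\item that degenerate cases such as $E=\emptyset$ need the conventions that an empty union is $\emptyset$ and an empty intersection is $U$.
\end{itemize}

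Your structural induction also works for infinite $\mathcal R$. The atom argument needs $\mathcal R$ finite, which holds in the paper's setting, but in exchange it gives a canonical form with at most $2^{|\mathcal R|}$ clauses.

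One small slip: your fallback clause $S_r\cap(U\setminus S_r)$ for $E=\emptyset$ with $m\ge 1$ repeats the index $r$. That repetition is exactly what your intersection step eliminates to respect the one-literal-per-index form, so you should rely on the empty union there.
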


\begin{proof}

For each $u \in U$, define $X_r(u) = \mathbf{1}[u \in S_r]$.  
Any $E$ induces a Boolean formula $\varphi(\{X_r(u)\})$ with:
\begin{equation}
    \cup \leftrightarrow \lor,\quad
\cap \leftrightarrow \land,\quad
A\setminus B \leftrightarrow A \land \neg B.
\end{equation}

By the DNF theorem, $\varphi$ can be written as
\begin{equation}
    \varphi \equiv \bigvee_{j=1}^{m} \bigwedge_{r\in C_j} \ell_{jr},
\quad
\ell_{jr} \in \{X_r, \neg X_r\}.
\end{equation}

Mapping back:
\begin{equation}
    X_r \mapsto S_r,\quad \neg X_r \mapsto U\setminus S_r,
\end{equation}

gives
\begin{equation}
    E = \bigcup_{j=1}^{m}\ \bigcap_{r\in C_j} L_r.
\end{equation}
This establishes the expressive completeness of atomic retrieval composition, implying that OSCAR can in principle generalize to more complex queries beyond the current two-clause instantiation.

\end{proof}

\section{Binary Logit-Based Relevance Scoring}
\label{appendix:yesno_scoring}
Our verifier computes a relevance score by comparing the likelihood of generating \texttt{``yes''} and \texttt{``no''} as the \emph{next decoded token}. 
Let \(\mathcal{V}\) denote the vocabulary, and let \(z_t\) be the model's output logit for token \(t \in \mathcal{V}\) under a fixed context which consists of a reference image, a modification query, and a candidate image. The corresponding prompt for the verifier is provided in Appendix~\ref{appendix:prompts}.

\textbf{Step 1: Next-token distribution over the whole vocabulary.}
By definition, VLM induces a normalized probability distribution over the next token via softmax:
\begin{equation}
P(t) \;=\; \frac{\exp(z_t)}{\sum\limits_{v \in \mathcal{V}} \exp(z_v)} , \quad t \in \mathcal{V}.
\label{eq:full_softmax}
\end{equation}
In particular,
\begin{equation}
P(\texttt{yes}) \;=\; \frac{\exp(z_{\texttt{yes}})}{\sum\limits_{v \in \mathcal{V}} \exp(z_v)}, 
\qquad
P(\texttt{no}) \;=\; \frac{\exp(z_{\texttt{no}})}{\sum\limits_{v \in \mathcal{V}} \exp(z_v)}.
\label{eq:yes_no_full}
\end{equation}

\textbf{Step 2: Restricting to a binary decision and renormalization.}
We interpret relevance as a binary decision within the subset \(\{\texttt{yes}, \texttt{no}\}\).

\begin{align}
P(\texttt{yes}\mid \texttt{yes}\ \text{or}\ \texttt{no})
&=
\frac{P(\texttt{yes})}{P(\texttt{yes}) + P(\texttt{no})} \\
&=
\frac{\frac{\exp(z_{\texttt{yes}})}{\sum_{v \in \mathcal{V}} \exp(z_v)}}
{\frac{\exp(z_{\texttt{yes}})}{\sum_{v \in \mathcal{V}} \exp(z_v)} +
 \frac{\exp(z_{\texttt{no}})}{\sum_{v \in \mathcal{V}} \exp(z_v)}} \\
&=
\frac{\exp(z_{\texttt{yes}})}
{\exp(z_{\texttt{yes}}) + \exp(z_{\texttt{no}})} \\
&=
\frac{1}{1 + \exp\!\left(z_{\texttt{no}} - z_{\texttt{yes}}\right)} \\
&=
\sigma\!\left(z_{\texttt{yes}} - z_{\texttt{no}}\right).
\label{eq:binary_sigmoid}
\end{align}
where \(\sigma(x) = \frac{1}{1+\exp(-x)}\) is the sigmoid function.
In our implementation, we use this binary probability as the relevance score \(s_u=\sigma(z_{yes}-z_{no})\).

\section{Prompts}
\label{appendix:prompts}
For completeness, we present the planner, image captioning, and verifier prompts used in our OSCAR framework below.

\onecolumn
\tcbset{
    promptstyle/.style={
        colback=backorange,
        colframe=frameorange,
        fonttitle=\bfseries,
        arc=2pt,
        breakable, 
        left=2mm, right=2mm, top=2mm, bottom=2mm,
        boxrule=0.5pt,
        fontupper=\small,
        fontlower=\small,
    }
}

\lstdefinelanguage{json}{
    basicstyle=\ttfamily\small,
    numbers=none,
    showstringspaces=false,
    breaklines=true,
    frame=single,
    backgroundcolor=\color{white},
    stringstyle=\color{red},
    keywordstyle=\color{blue},
    commentstyle=\color{gray},
    morestring=[b]",
    morestring=[d]',
    linewidth=\textwidth
}

\begin{tcolorbox}[promptstyle, title=Planner Agent Prompt, width=\linewidth]
\small
\textbf{Role:} You are a search planner for Composed Image Retrieval (CIR).

\textbf{Task:} Given a reference image and a modification query describing ``Changes'' and ``Shared concept'', find target images that match the modified concept.

\vspace{\baselineskip}

\textbf{Available Models:}
\begin{itemize}[nosep, leftmargin=*]
    \item Embedding models: \textit{rzenembed}, \textit{ops\_embedding}, \textit{QQMM\_embed\_v2}
    \item Caption models: \textit{bge\_m3}, \textit{Qwen3\_Embed\_4B}, \textit{Qwen3\_Embed\_8B}
\end{itemize}

\vspace{\baselineskip}

\textbf{Available Tools:}
\begin{itemize}[nosep, leftmargin=*]
    \item \textit{embedding\_search\_\{model\_name\}(query, top\_k, reference\_image)}: Visual similarity search. Best for visual attributes, colors, spatial relationships.
    \item \textit{caption\_search\_\{model\_name\}(query, top\_k, reference\_image)}: Semantic text search on captions. Best for semantic descriptions, conceptual changes.
    \item \textit{set\_operation(steps)}: Combine results (INTERSECT, UNION, DIFFERENCE). \textbf{MUST be the last tool call when using 2+ tools.}
\end{itemize}

\par\noindent\rule{\textwidth}{0.4pt}

\textbf{CRITICAL - Set Operation Rules:}
\begin{itemize}[nosep, leftmargin=*]
    \item \textbf{INTERSECT (AND):} Return items in ALL operands. Use when results must satisfy multiple criteria.
    \item \textbf{UNION (OR):} Return items in ANY operand. Use for maximum coverage.
    \item \textbf{DIFFERENCE (A - B):} Return items in first but NOT in second. Use to exclude specific unwanted features.
\end{itemize}

\par\noindent\rule{\textwidth}{0.4pt}

\textbf{Strategy - How to Handle CIR Queries:}

\textbf{Step 1: Analyze Additions, Removals, or Changes}
\begin{itemize}[nosep, leftmargin=*]
    \item \textbf{When to use DIFFERENCE:} Only when an object is COMPLETELY ABSENT (\eg, ``has no X'', ``without X'').
    \item \textbf{When NOT to use DIFFERENCE:} If the object exists but changed attributes (\eg, ``has fewer people'', ``is in greyscale'', ``has different color'').
    \item \textbf{Negative Query Rule:} Negative queries must ONLY contain the absent object. NEVER include shared concepts (\eg, if shared is ``cake'', negative query should be ``toppings'', NOT ``cake with toppings'').
\end{itemize}

\vspace{\baselineskip}

\textbf{Step 2: Generate Tool Calls}
\begin{itemize}[nosep, leftmargin=*]
    \item \textbf{Positive tools:} Search for ``Shared concept'' + additions.
    \item \textbf{Negative tools:} ONLY search for objects that DON'T EXIST.
\end{itemize}
\vspace{\baselineskip}

\textbf{Step 3: Set Operation (Mandatory)}
\begin{itemize}[nosep, leftmargin=*]
    \item Union all positive tools $\rightarrow$ ``positive''.
    \item Intersect all negative tools $\rightarrow$ ``negative''.
    \item \textit{DIFFERENCE(positive, negative)} $\rightarrow$ ``final''.
\end{itemize}

\par\noindent\rule{\textwidth}{0.4pt}

\textbf{Static Reference Examples (Learn the format!)}

\par\noindent\rule{\textwidth}{0.4pt}

\textbf{Below are examples showing the correct tool call FORMAT (not related to your query):}
\begin{lstlisting}[language=json]
Query: Changes: shows two people and has a more colorful background. Shared concept: a girl with a traditional Chinese umbrella.
Reference image: 000000271520.jpg
Reference caption: A young woman in traditional Chinese attire performs gracefully on a dimly lit stage, holding a large, ornate red parasol.

Analysis:
	- Shared: girl with Chinese umbrella/parasol, traditional attire
	- Changes: ADD two people, ADD colorful background (instead of dimly lit stage)
    - No removals, No DIFFERENCE needed, just UNION
	- Search strategy: Find images with girl + Chinese umbrella + multiple people + colorful setting


<tool_call>
{"name": "embedding_search_QQMM_embed_v2", "arguments": {"query": "Two people with traditional Chinese umbrella parasol, colorful vibrant background", "top_k": 20, "reference_image": "000000271520.jpg"}}
</tool_call>
<tool_call>
{"name": "embedding_search_rzenembed", "arguments": {"query": "Girl with Chinese parasol umbrella, two people together, colorful setting", "top_k": 15, "reference_image": "000000271520.jpg"}}
</tool_call>
<tool_call>
{"name": "embedding_search_ops_embedding", "arguments": {"query": "Chinese umbrella, traditional dress, multiple people, colorful background", "top_k": 15, "reference_image": "000000271520.jpg"}}
</tool_call>
<tool_call>
{"name": "caption_search_Qwen3_Embedding_8B", "arguments": {"query": "girl with Chinese umbrella parasol, two people, colorful vibrant background", "top_k": 10, "reference_image": "000000271520.jpg"}}
</tool_call>
<tool_call>
{"name": "set_operation", "arguments": {"steps": [  {"name": "final", "op": "UNION", "operands": [0, 1, 2, 3]}]}}
</tool_call>
\end{lstlisting}

\par\noindent\rule{\textwidth}{0.4pt}
\textbf{Example2 - DIFFERENCE (Remove elements)}

\begin{lstlisting}[language=json]
Query: Changes: has no toppings and shows the rest in the background. Shared concept: a slice of cake on a plate in the foreground.
Reference image: 000000119289.jpg
Reference caption: A slice of chocolate cake with white frosting and cherry toppings sits on a white plate.
Analysis:
	- Shared: slice of cake on plate, foreground
    - Changes: REMOVE toppings (no cherry, no frosting decorations), ADD background elements
	- has no toppings, toppings are ABSENT: Use DIFFERENCE
	-  CRITICAL: cake EXISTS in target! NEVER include cake in negative query!
    Search strategy: Find cake slices WITHOUT toppings/decorations, with background visible.Use DIFFERENCE to exclude images with visible toppings.
    Warning: Negative query ONLY contains toppings/cherries/decorations, NOT cake!

<tool_call>
{"name": "embedding_search_QQMM_embed_v2", "arguments": {"query": "Plain slice of cake on plate, simple cake, background visible", "top_k": 20, "reference_image": "000000119289.jpg"}}
</tool_call>
<tool_call>
{"name": "embedding_search_rzenembed", "arguments": {"query": "Simple cake slice on plate in foreground, background scene", "top_k": 15, "reference_image": "000000119289.jpg"}}
</tool_call>
<tool_call>
{"name": "embedding_search_ops_embedding", "arguments": {"query": "Cake slice plate foreground, plain simple, background visible", "top_k": 25, "reference_image": "000000119289.jpg"}}
</tool_call>
<tool_call>
{"name": "caption_search_Qwen3_Embedding_8B", "arguments": {"query": "plain cake slice on plate, background visible", "top_k": 10, "reference_image": "000000119289.jpg"}}
</tool_call>
<tool_call>
{"name": "embedding_search_QQMM_embed_v2", "arguments": {"query": "cherry topping, frosting decorations, fruit toppings, garnish", "top_k": 20, "reference_image": "000000119289.jpg"}}
</tool_call>
<tool_call>
{"name": "embedding_search_rzenembed", "arguments": {"query": "cherries, whipped cream topping, dessert decorations, garnish", "top_k": 20, "reference_image": "000000119289.jpg"}}
</tool_call>
<tool_call>
{"name": "caption_search_Qwen3_Embedding_8B", "arguments": {"query": "toppings, cherries, frosting decorations, garnished dessert", "top_k": 10, "reference_image": "000000119289.jpg"}}
</tool_call>
<tool_call>
{"name": "set_operation", "arguments": {"steps": [{"name": "positive", "op": "UNION", "operands": [0, 1, 2, 3]},  {"name": "negative", "op": "INTERSECT", "operands": [4, 5, 6]}, {"name": "final", "op": "DIFFERENCE", "operands": ["positive", "negative"]}]}}
</tool_call>
\end{lstlisting}

\par\noindent\rule{\textwidth}{0.4pt}

\textbf{RETRIEVED SIMILAR CASES}

Below are some similar cases retrieved based on your query and reference image.
You can learn these cases patterns for: \begin{itemize}[nosep, leftmargin=*]
    \item Query Rewrite
    \item Tool selection and composition
    \item  top\_k values
\end{itemize} 

\textcolor{blue}{\{retrieved\_cases\}}

\par\noindent\rule{\textwidth}{0.4pt}

Now solve the following task:
Modification Query: \textcolor{blue}{\{query\}}

Reference Image: \textcolor{blue}{\{ref\_img\}}

Reference Image Caption: \textcolor{blue}{\{ref\_caption\}}

\vspace{\baselineskip}
Now generate tool\_calls to retrieve target images based on the provided Modification Query and Reference Image.
\end{tcolorbox}

\begin{tcolorbox}[promptstyle, title=Image Captioning Prompt, width=\linewidth]

Describe this image comprehensively:

1. Main subjects and their attributes

2. Actions, interactions, and relationships

3. Scene, background, and context

4. Any visible text (signs, labels, watermarks, etc.)

Return your response as JSON:

\texttt{\textasciigrave\textasciigrave\textasciigrave}json

\{

    "caption": "your comprehensive description here"

\}

\texttt{\textasciigrave\textasciigrave\textasciigrave}
\end{tcolorbox}

\begin{tcolorbox}[promptstyle, title=Verifier Agent Prompt, width=\linewidth]

\textcolor{blue}{\{Reference image\}}

\textcolor{blue}{\{Candidate image\}}

Modification query: \textcolor{blue}{\{<query>\}}

\medskip
\textbf{Instruction:}

You are given two images.  

The first image is the reference image.  

The second image is a candidate image.

Given the modification query: \textcolor{blue}{\{<query>\}}, determine whether the candidate image matches what you would expect after applying the modification to the reference image.

Answer with \textbf{``yes''} or \textbf{``no''} only.
    
\end{tcolorbox}

\end{document}